\newtheorem{theorem}{Theorem}
\newtheorem{lemma}{Lemma}
\newtheorem{remark}{Remark}
\newcommand{\tri}{$\pi_i[\underline{t}_i,\overline{t}_i]$\xspace}
\newcommand{\trj}{$\pi_j[\underline{t}_j,\overline{t}_j]$\xspace}
\newcommand{\timei}{$[\underline{t}_i,\overline{t}_i]$\xspace}
\newcommand{\timej}{$[\underline{t}_j,\overline{t}_j]$\xspace}
\begin{document}

\title{C$^*$: A New Bounding Approach for the Moving-Target Traveling Salesman Problem}


\author{Allen George Philip$^{1}$, Zhongqiang Ren$^{2}$, Sivakumar Rathinam$^{1}$, and Howie Choset$^{3}$
	 \thanks{1. Allen George Philip and Sivakumar Rathinam are with Texas A\&M University, College Station, TX 77843, USA.}
	 \thanks{2. Zhongqiang Ren is with Shanghai Jiao Tong University, 800 Dongchuan Road, Shanghai, China.}
  \thanks{3. Howie Choset is with Carnegie Mellon University, 5000 Forbes Ave., Pittsburgh, PA 15213, USA.}%
}





\newcommand{\blue}{\color{black}}
\newcommand{\red}{\color{red}}
\newcommand{\green}{\color{green}}

\newcommand\abbrCstarVars{{\it C$^*$}\xspace}
\newcommand\abbrCstarLinear{{\it C$^*$-Linear}\xspace}
\newcommand\abbrCstar{{\it C$^*$}\xspace}
\newcommand\abbrCstardash{{\it C$^*$-Lite}\xspace}
\newcommand\abbrCstarSampling{{\it C$^*$-Sampling}\xspace}
\newcommand\abbrCstarGeometric{{\it C$^*$-Geometric}\xspace}
\newcommand\abbrLinInst{simple\xspace}
\newcommand\abbrPWLinInst{complex\xspace}
\newcommand\abbrDubinsInst{generic\xspace}
\newcommand\abbrSamplingPar{$k$\xspace}
\newcommand\abbrSamplingGapTol{$\epsilon$\xspace}
\newcommand\abbrRT{RT\xspace}
\newcommand\abbrtotalRT{Total RT\xspace}
\newcommand\abbrGRT{Graph Gen RT\xspace}
\newcommand\abbrperDev{$\%$ deviation\xspace}
\newcommand\abbrsftAlg{{\it AlgoSFT}\xspace}

\newcommand\abbrEFAT{EFAT\xspace}
\newcommand\abbrEFA{EFA\xspace}
\newcommand\abbrLFDT{LFDT\xspace}
\newcommand\abbrLFD{LFD\xspace}
\newcommand\abbrTFRT{graph generation R.T.\xspace}
\newcommand\abbrEGTSPRT{E-GTSP-RT\xspace}
\newcommand\abbrtotRT{total R.T.\xspace}
\newcommand\abbrperdev{$\%$ deviation\xspace}
\newcommand\abbrSOCPRT{run-time for SOCP\xspace}

\newcommand\algname[1]{\textsf{#1}\xspace}
\newcommand\ERCAstar{\algname{ERCA*}}
\newcommand\ERCAstarNaive{\algname{ERCA*-Naive}}
\newcommand\EMOAstar{\algname{EMOA*}}
\newcommand\RCBDAstar{\algname{RCBDA*}}
\newcommand\PulseAlg{\algname{Pulse}}
\newcommand\BiPulseAlg{\algname{BiPulse}}

\newcommand\procedurename[1]{\textsl{#1}}
\newcommand\IsPrunedByFront{\procedurename{IsPrunByFront}}
\newcommand\IsPrunedByResour{\procedurename{IsPrunByResour}}
\newcommand\FilterAndAddFront{\procedurename{FilterAndAddFront}}
\newcommand\ProcFilter{\procedurename{Filter}}

\newcommand\FrontierSet{\mathcal{F}}
\newcommand\DomSymbol{\preceq}
\newcommand\NonDomSymbol{\npreceq}
\newcommand\LexLessSymbol{<_{lex}}
\newcommand\LexLargerSymbol{>_{lex}}
\newcommand\ResourceLimit{\Vec{r}_{limit}}
\newcommand\NDSymbol{\mathcal{N}\mathcal{D}}


\maketitle

\begin{abstract}
We introduce a new bounding approach called Continuity* (\abbrCstar), which provides optimality guarantees for the Moving-Target Traveling Salesman Problem (MT-TSP). Our approach relaxes the continuity constraints on the agent's tour by partitioning the targets' trajectories into smaller segments. This allows the agent to arrive at any point within a segment and depart from any point in the same segment when visiting each target. This formulation enables us to pose the bounding problem as a Generalized Traveling Salesman Problem (GTSP) on a graph, where the cost of traveling along an edge requires solving a new problem called the Shortest Feasible Travel (SFT). {\blue We present various methods for computing bounds for the SFT problem, leading to several variants of \abbrCstar.} We first prove that the proposed algorithms provide valid lower-bounds for the MT-TSP. Additionally, we provide computational results to validate the performance of all \abbrCstar variants on instances with up to 15 targets. For the special case where targets move along straight lines, we compare our \abbrCstar variants with a mixed-integer {\blue Second Order Conic Program (SOCP)} based method, the current state-of-the-art solver for the MT-TSP. While the SOCP-based method performs well on instances with 5 and 10 targets, \abbrCstar outperforms it on instances with 15 targets. For the general case, on average, our approaches find feasible solutions within approximately 4.5$\%$ of the lower-bounds for the tested instances.

\end{abstract}


\graphicspath{{./figure/}}

\section{Introduction}\label{intro}

The Traveling Salesman Problem (TSP) is one of the most important problems in optimization with several applications including  
unmanned vehicle planning \cite{oberlin2010today,liu2018efficient,ryan1998reactive, yu2002implementation}, transportation and delivery \cite{ham2018integrated}, monitoring and surveillance \cite{venkatachalam2018two,saleh2004design}, disaster management \cite{cheikhrouhou2020cloud}, precision agriculture \cite{conesa2016mix}, and search and rescue \cite{zhao2015heuristic, brumitt1996dynamic}. Given a set of target locations (or targets) and the cost of traveling between any pair of targets, the TSP aims to find a shortest tour for a vehicle to visit each of the targets exactly once. In this paper, we consider a natural generalization of the TSP where the targets are mobile and traverse along known trajectories. The targets also have time-windows during which they must be visited. This generalization is motivated by applications including monitoring and surveillance \cite{deMoraes2019,wang2023moving,marlow2007travelling,maskooki2023bi}, missile defense \cite{helvig2003,stieber2022,smith2021assessment}, fishing \cite{groba2015solving,granado2024fishing}, human evacuation \cite{sriniketh2023robot}, and dynamic target tracking \cite{englot2013efficient}, where vehicles are required to visit or monitor a set of mobile targets. We refer to this generalization as the Moving-Target Traveling Salesman Problem, or MT-TSP for short (refer to Fig.~\ref{fig:feasible}). The focus of this paper is on the optimality guarantees for the MT-TSP.


\vspace{1mm}
Targets are typically assumed to move at a speed equal to or slower than the agent's speed since the agent is anticipated to visit or intercept each target \cite{helvig2003}. When the speed of every target decreases to 0, the Moving-Target Traveling Salesman Problem (MT-TSP) simplifies to the standard Traveling Salesman Problem (TSP). Therefore, MT-TSP is a generalization of the TSP and is NP-Hard. {\blue In addition, the MT-TSP considered in this paper includes time-window constraints for visiting the targets. These additional constraints make the problem even more challenging, as finding a feasible solution to the TSP with time-window constraints is NP-hard, even for stationary targets \cite{savelsbergh1985local}.} Unlike the TSP which has been extensively studied, the current literature on MT-TSP is limited. 

\begin{figure}[t]
    \centering    
    \includegraphics[width=\linewidth]{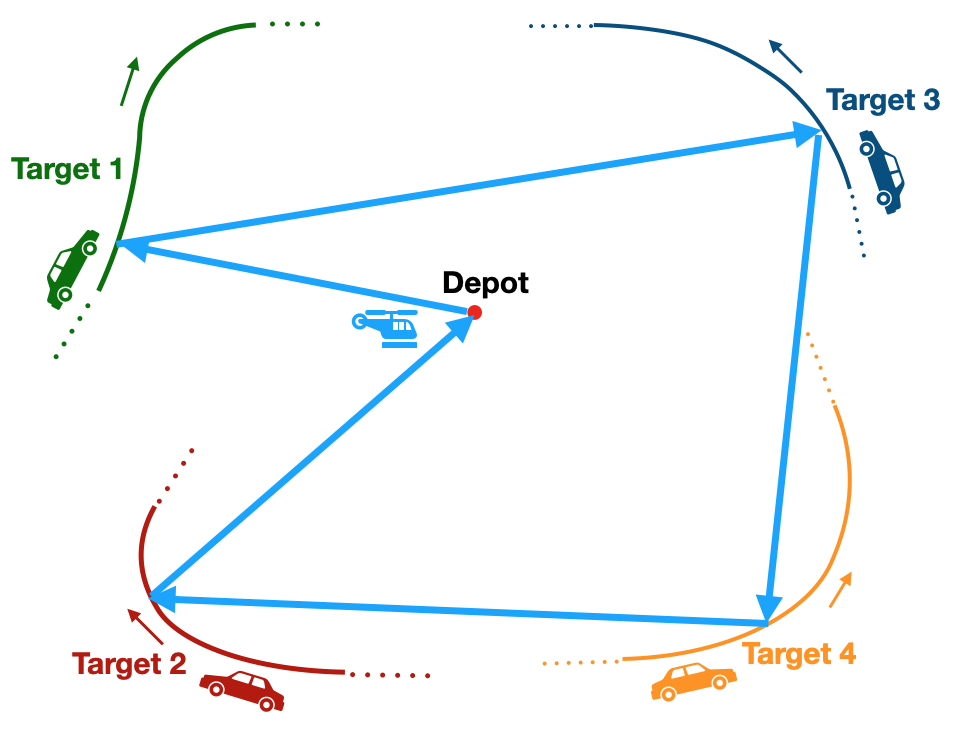}
    \caption{A feasible solution for an instance of the MT-TSP with four targets. The blue lines shows the path of the vehicle. Also, the colored solid segments for each target indicates the part of its trajectory corresponding to its time-windows when the vehicle can visit the target.}
    \label{fig:feasible}
\end{figure}

\subsection{Literature Review}

Exact and approximation algorithms are available in the literature for some special cases of the MT-TSP where the targets are assumed to move along straight lines with constant speeds. In \cite{chalasani1999approximating}, Chalasani and Motvani propose a $(5(1+v)/3(1-v))$-approximation algorithm for the case where all the targets move in the same direction with the same speed $v$. Hammar and Nilsson \cite{hammar1999}, also present a $(1+(\epsilon/(1-v))$-approximation algorithm for the same case. They also show that the MT-TSP cannot be approximated better than a factor of $2^{\Omega(\sqrt{n})}$ by a polynomial time algorithm (unless $P=NP$), where $n$ is the number of targets. In \cite{helvig2003}, Helvig et al. develop an exact $O(n)$-time algorithm for the case when the moving targets are restricted to a single line (SL-MT-TSP). Also, for the case where most of the targets are static and at most $O(log(n)/log(log(n)))$ targets are moving out of the $n$ targets, they propose a $(2+\epsilon)$-approximation algorithm. An exact algorithm is also presented \cite{helvig2003} for the MT-TSP with resupply where the agent must return to the depot after visiting each target; here, the targets are assumed to be far away from the depot or slow, and move along lines through the depot, towards or away from it. In \cite{stieber2022}, Stieber, and F{\"u}genschuh formulate the MT-TSP as a mixed-integer {\blue Second Order Conic Program} (SOCP) by relying on the key assumption that the targets travel along straight lines. Also, multiple agents are allowed, agents are not required to return to the depot, and each target has to be visited exactly once within its visibility window by one of the agents. Optimal solutions to the MT-TSP are then found for this special case. Recently, we have also developed a new formulation \cite{philip2024mixed} for a special case when the targets travel along straights lines and the objective is to minimize the distance traveled by the agent; this paper on the other hand deals with minimizing travel time for a generalization of the MT-TSP with multiple time-windows.

\vspace{1mm}
 Apart from the above methods that provide optimality guarantees to some special cases of the MT-TSP, feasible solutions can be obtained using heuristics as shown in \cite{bourjolly2006orbit, choubey2013, deMoraes2019, englot2013efficient, groba2015solving, jiang2005tracking, marlow2007travelling, ucar2019meta, wang2023moving, smith2021assessment, sriniketh2023robot}. However, these approaches do not show how far the feasible solutions are from the optimum.


\vspace{1mm}
A few variants of the MT-TSP and related problems have also been addressed in the literature. In \cite{hassoun2020}, Hassoun et al. suggest a dynamic programming algorithm to find an optimal solution to a variant of the SL-MT-TSP where the targets move at the same speeds and may appear at different times. Masooki and Kallio in \cite{maskooki2023bi} address a bi-criteria variant of the MT-TSP where the number of targets vary with time and their motion is approximated using (discontinuous) step functions of time. 

\vspace{1mm}

\subsection{Our work and contributions}

\blue In this article, we consider a generalization of the MT-TSP where each target moves along piecewise-linear segments or Dubins \cite{Dub} curves. \color{black} Each target may also be associated with time-windows during which the vehicle must visit the target. {\blue When targets move along generic trajectories, such as those considered in this paper, no algorithms currently exist for finding the optimum to the MT-TSP. When optimal solutions are difficult to obtain, one can develop algorithms that can find feasible solutions which provide upper bounds, and lower-bounding algorithms that provide tight underestimates to the optimum. Optimality guarantees can then be obtained by comparing the upper and lower-bounds.}


One way to generate feasible solutions to this problem is to sample a discrete set of times from the (planning) time horizon, and then consider the corresponding set of locations for each target; given a pair of targets and their sampled times, one can readily check for feasibility of travel and compute the travel costs between the targets. A solution can then be obtained for the MT-TSP by posing it as a {\it Generalized} TSP (GTSP) \cite{laporte1987generalized}  where the objective is to find an optimal TSP tour that visits exactly one (sampled) location for each target. While this approach can produce feasible solutions and upper-bounds, it may not find the optimum or lower-bounds for the MT-TSP. {\blue Therefore, we seek to develop methods that primarily focus on finding tight lower-bounds in this paper. }

In the special case where each target travels along a straight line, the SOCP-based formulation in \cite{stieber2022} can be used to find the optimum. For the general case where each target travels along a trajectory made of {\blue piecewise-linear segments or Dubins curves}, currently, we do not know of any method in the literature that can find the optimum or provide tight lower-bounds to the optimum for the MT-TSP. In this article, we develop a new approach called Continuity* (\abbrCstar) to answer this question.


\vspace{1mm}
\abbrCstar relies on the following key ideas. First, we relax the continuity of the trajectory of the agent and allow it to be discontinuous whenever it reaches the trajectory of a target. We do this by partitioning the trajectory of each target into smaller intervals \footnote{Since each target travels at a constant speed, distance traveled along the trajectory has one to one correspondence to the time elapsed.} and allow the agent to arrive at any point in an interval and depart from any point from the same interval. We then construct a graph $\mathcal{G}$ where all the nodes (intervals) corresponding to each target are grouped into a cluster, and any two nodes belonging to distinct clusters is connected by an edge. Next, the cost of traveling any edge is obtained by solving a Shortest Feasible Travel (SFT) problem between two intervals corresponding to distinct targets. {\blue Specifically, given two distinct targets $i$ and $j$, and parts of their trajectories, $\bar{\pi}_i$ and $\bar{\pi}_j$, corresponding to two intervals, the SFT problem aims to find a time $t_i$ to depart from $\bar{\pi}_i$ and feasibly reach $\bar{\pi}_j$ at time $t_j$ such that $t_j - t_i$ is minimized.}

Once all the travel costs are computed, we formulate a Generalized TSP (GTSP) \cite{laporte1987generalized} in $\mathcal{G}$ which aims to find a tour such that exactly one node from each cluster is visited by the tour and the sum of the costs of the edges in the tour is minimum. We then show that our approach provides lower-bounds for the MT-TSP. As the number of partitions or discretizations of each target's trajectory increases, the lower-bounds get better and converge to the optimum. 

\vspace{1mm}
{\blue In addition to solving the SFT problem to optimality for targets moving along piecewise-linear segments, we also provide three simple and fast methods for computing bounds to the cost of an edge in $\mathcal{G}$ for more generic target trajectories. In this way, we develop several variants of \abbrCstar.} We also show how feasible solutions can be constructed from the lower-bounds, though this may not be always possible\footnote{If tight time-window constraints are present, finding feasible solutions is difficult in any case, whether we use \abbrCstar or not.} if challenging time-window constraints are present.

{\blue We provide extensive computational results to corroborate the performance of the variants of \abbrCstar  for instances with up to 15 targets. For the special case where targets travel along lines, we compare our \abbrCstar variants with the SOCP-based method, which is the current state-of-the-art solver for MT-TSP. While \abbrCstar provides similar bounds compared to the SOCP-based method for all cases, \abbrCstar is an order of magnitude faster relative to the SOCP-based method for instances with 15 targets. For the general case, on average, our approaches find feasible solutions within $\approx~$4.5$\%$ of the lower-bounds for the tested instances.}

\section{Problem Definition}\label{problem}

Let $S := \{s_1, s_2, \cdots, s_n\}$ be the set of targets. {\blue All the targets and the agent move in a 2D Euclidean plane.} Each target $i \in S$ moves at a constant speed $v_{i}>0$, and follows a \emph{continuous trajectory} $\pi_{i}$. {\blue In this paper, $\pi_{i}$ is piecewise-linear (made of a finite set of line-segments) or a Dubins curve (made of circular arcs of a given turning radius and line-segments).} Consider an agent that moves at a speed no greater than $v_{max}$ at any time instant. There are no other dynamic constraints placed on the motion of the agent. The agent starts and ends its path at a location referred to as the depot.
We assume $v_{max}> v_{i}$ for all $i \in S$. Also, any target $i\in S$ is associated with a set of $k$ time-windows $[\underline{t}_{i, 1}, \overline{t}_{i, 1}], \cdots, [\underline{t}_{i, k}, \overline{t}_{i, k}]$ during which times the agent can visit the target. The objective of the MT-TSP is to find a tour for the agent such that
\begin{itemize}
    \item the agent starts and ends its tour at the depot $d$,
    \item the agent visits each target $i\in S$ exactly once within one of its specified time-windows $[\underline{t}_{i, 1}, \overline{t}_{i, 1}], \cdots, [\underline{t}_{i, k}, \overline{t}_{i, k}]$, and
    \item the travel time of the agent is minimized.
\end{itemize}

\blue
\section{Notations and Definitions}\label{sec:notation}

A \emph{\bf trajectory-point} for a target $i\in S$ is denoted by $\pi_{i}(t)$ and represents the position occupied by target $i$ at time $t$. 
A \emph{\bf trajectory-interval} for a target $i$ is denoted by \tri, and refers to the set of all the positions occupied by $i$ over the time interval \timei, where $\underline{t}_i \leq \overline{t}_i$. In the special case when $\underline{t}_i =\overline{t}_i$, \tri reduces to a trajectory-point and can be written as just $\pi_i(\underline{t}_i) $. Suppose the time interval \timei lies within another time interval $[\underline{t}_{i,1}, \overline{t}_{i,1}]$ ($i.e.$ \timei $\subseteq [\underline{t}_{i,1}, \overline{t}_{i,1}]$). Then, \tri is said to be a \emph{\bf trajectory-sub-interval} that lies within the trajectory-interval $\pi_{i}[\underline{t}_{i,1}, \overline{t}_{i,1}]$.

A \emph{\bf travel} from $\pi_{i}(t)$ to $\pi_{j}(t')$ denotes the event where the agent departs from $\pi_{i}(t)$ at time $t$, and arrives at $\pi_{j}(t')$ at time $t'$. This is the same as saying the agent departs from target $i$ at time $t$ and arrives at target $j$ at time $t'$. For travel from the depot $d$ to $\pi_{i}(t)$, the agent departs from $d$ at time $t_d = 0$, and for travel from $\pi_{i}(t)$ to $d$, the agent arrives at $d$ at some time $t_d \geq t$.

A {\bf travel is feasible} if the agent can complete it without exceeding its maximum speed $v_{max}$. Clearly, feasible travel requires the arrival time to be greater than or equal to the departure time. We define a feasible travel exists from some trajectory-interval \tri to another trajectory-interval \trj, if there exists some $t \in$\timei and some $t' \in$\timej such that the travel from $\pi_{i}(t)$ to $\pi_{j}(t')$ is feasible.  We also define a feasible travel exists from a trajectory-point $\pi_{i}(t_i)$ to a target trajectory $\pi_{j}$, if there exists some $t\geq 0$ such that travel from $\pi_{i}(t_i)$ to $\pi_{j}(t)$ is feasible. Feasible travels from a trajectory to a trajectory-point or feasible travels to and from a depot can be defined similarly. 

\vspace{1mm}
Now, we define the following optimization problems between any two targets $i$ and $j$ which we need to solve as part of our approach to the MT-TSP:
\begin{itemize}

\item {\bf Shortest Feasible Travel (SFT) problem from the trajectory-interval \tri to the trajectory-interval \trj:}
\begin{itemize}
 
 \item $\min_{t,t'} \ {(t'-t)}$ where $t \in$ \timei, $\ t' \in$ \timej, and travel from $\pi_{i}(t)$ to $\pi_{j}(t')$ is feasible. \\
\end{itemize}

\item {\bf Earliest Feasible Arrival Time (\abbrEFAT) problem from the trajectory-point $\pi_{i}(t_i)$ to the trajectory $\pi_{j}$:}

\begin{itemize}
    \item $\min t$ such that $t \geq 0$ and the travel from $\pi_{i}(t_i)$ to $\pi_{j}(t)$ is feasible.
    \end{itemize}
The {optimal time} (also referred to as EFAT) to the above problem is denoted as ${\bm{\mathcal{E}(t_i)}}$. In other words, after departing from $\pi_{i}(t_i)$, the agent can reach target $j$ at the earliest at the trajectory-point $\pi_{j}(\mathcal{E}(t_i))$. \\ 

\item {\bf Latest Feasible Departure Time (\abbrLFDT) problem from a trajectory $\pi_{i}$ to a trajectory-point $\pi_{j}(t_j)$:} 
\begin{itemize}
\item $\max t$ such that $t \geq 0$ and the travel from $\pi_{i}(t)$ to $\pi_{j}(t_j)$ is feasible. 
\end{itemize}
The optimal time (also referred to as LFDT) to the above problem is denoted as $\bm{\mathcal{L}(t_j)}$. In other words, if the agent needs to arrive at $\pi_j(t_j)$, the latest position it must depart from on $\pi_i$ is $\pi_i(\mathcal{L}(t_j))$.

 \end{itemize}

\section{\abbrCstar Algorithm}\label{method}

\begin{figure}[htb!]
\centering
  \begin{subfigure}[t]{.82\linewidth}
    \centering\includegraphics[width=1\linewidth]{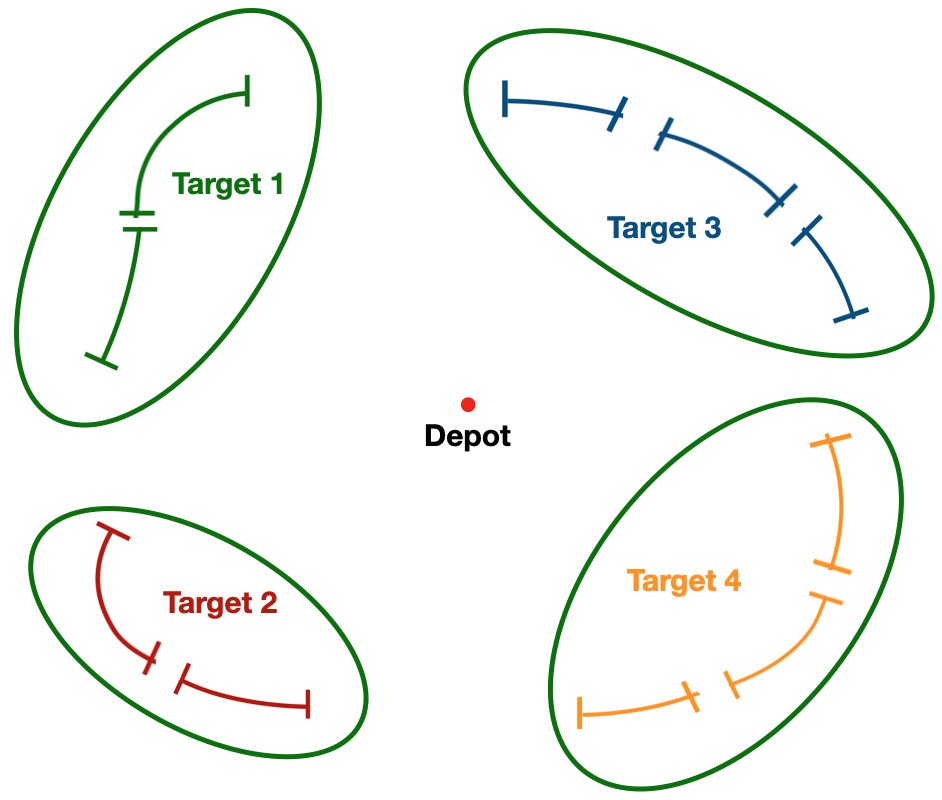}
    \caption{Partition the time-windows to create a cluster of trajectory-intervals for each target.}
    \label{fig:cstarA}
  \end{subfigure} \\ \vspace{.1cm}
  \begin{subfigure}[t]{.82\linewidth}
    \centering\includegraphics[width=1\linewidth]{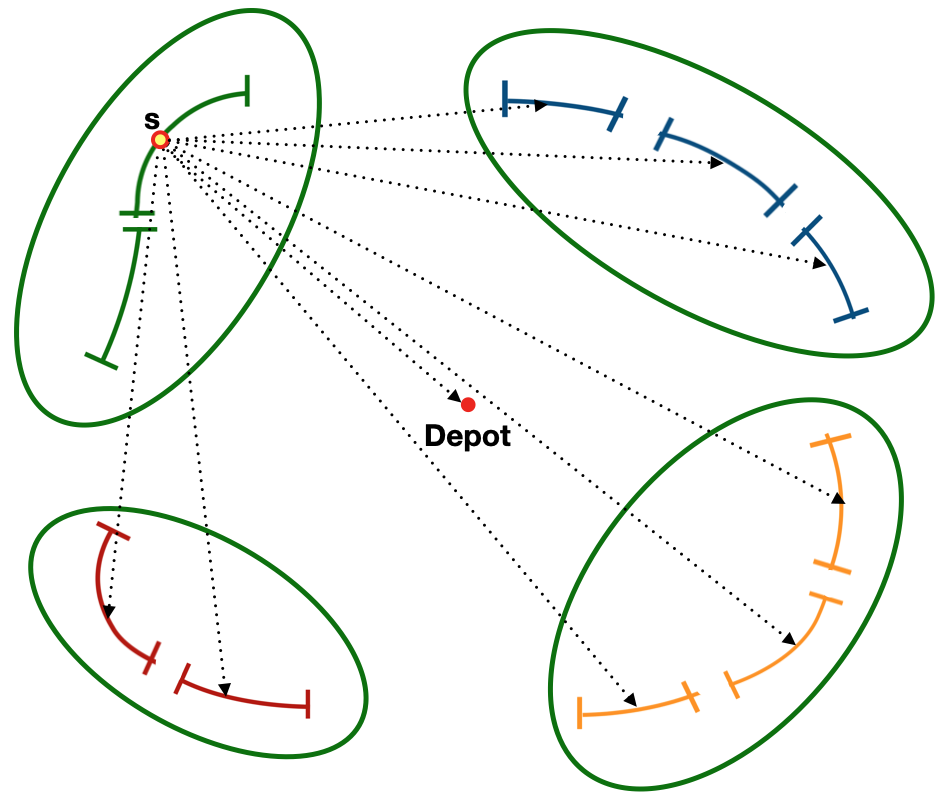}
    \caption{Construct a graph built on the clusters of trajectory-intervals corresponding to the targets. In this figure, we only show the directed edges from trajectory-interval $s$ to all the nodes outside the cluster containing $s$.}
     \label{fig:cstarB}
  \end{subfigure} \\ \vspace{.1cm}
  \begin{subfigure}[t]{.82\linewidth}
    \centering\includegraphics[width=1\linewidth]{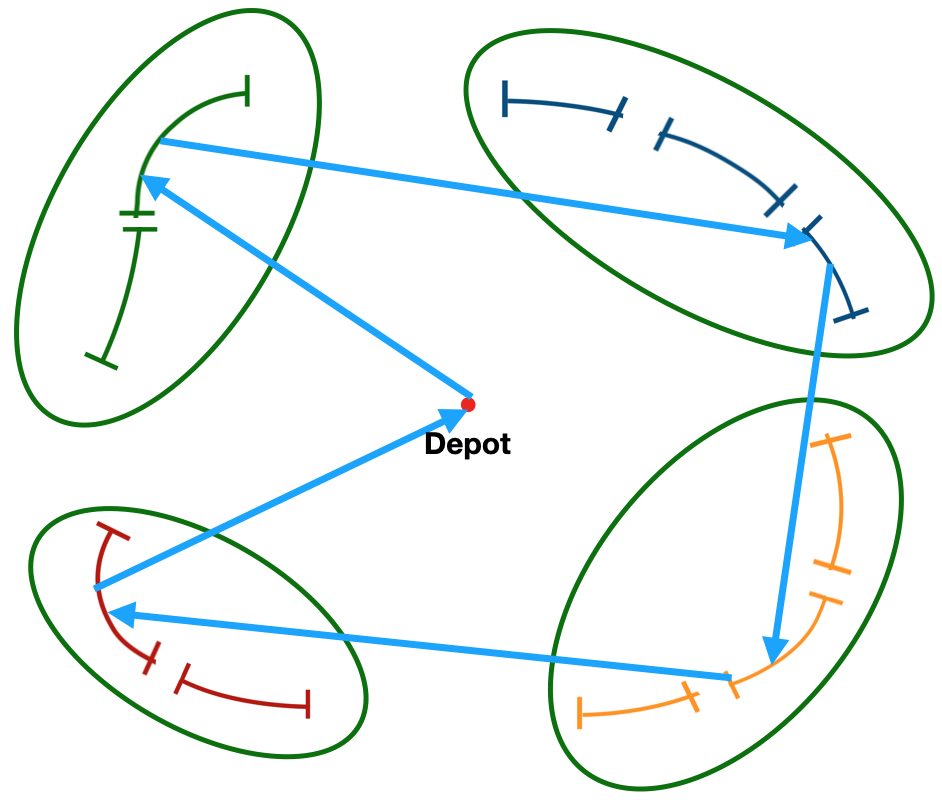}
    \caption{Solve a GTSP on the graph to find an optimal solution to the GTSP. Note the discontinuities in the optimal solution when the agent's path reaches a trajectory-interval (a node) in each cluster.}
     \label{fig:cstarC}
  \end{subfigure}
  \caption{Illustration of the key steps in the \abbrCstar Algorithm.}
  \label{fig:cstar}
\end{figure}

The following are the three key steps in the \abbrCstar Algorithm. These steps are also illustrated in Fig. \ref{fig:cstar}.

\begin{enumerate}
\item {\bf Partition the time-windows corresponding to each target:} The time-windows corresponding to each target in $S$ is first partitioned into smaller intervals\footnote{Here, without loss of generality, we assume that each time-window corresponding to a target is an integer multiple of $\Delta$.} of a given size (say $\Delta$). This partitioning step results in a cluster of trajectory-intervals corresponding to each target as shown in Fig. \ref{fig:cstarA}. Henceforth, each trajectory-interval in any of these clusters is also referred to as a node. Also, the cluster of nodes corresponding to the target $i\in S$ is denoted as $C_i$. In the next step, we will construct a graph using these nodes and relax the continuity of the agent's path when it reaches any one of the nodes corresponding to a target. \\

\item {\bf Construct a graph $\mathcal{G}$ with travel costs:} The graph $\mathcal{G}$ we construct is defined over the depot and the nodes in $C_i$, $i\in S$. Any two nodes $p,q$ in $\mathcal{G}$ are connected by directed edges if they belong to distinct clusters. Also, the depot is connected to all the remaining nodes in $\mathcal{G}$ through directed edges. Formally, $\mathcal{G}:=(V,E)$ where $V=\{d\}\cup \{p: p\in C_i, i\in S\}$ and $E:=\{(p,q): p \in C_i, q\in C_j, i,j\in S, i\neq j \} \cup \{(p,d),(d,p): p \in V\setminus \{d\}\}$. Each edge $(p,q)$ in $\mathcal{G}$ is associated with a non-negative, travel cost $l_{pq}$ which is obtained by using any one of the algorithms presented in section \ref{sec:edge_cost}. Since, the aim of this paper is to generate tight bounds, given an edge $(p,q)$, $l_{pq}$ must be a lower-bound on the cost of the SFT from node $p$ to node $q$.  \\

\item {\bf Solve the GTSP on $\mathcal{G}$ to find a bound}: The objective of the GTSP is to find a tour that starts and ends at the depot such that exactly one node is visited from each of the clusters corresponding to the targets and the sum of the travel costs is minimized. We will prove in Theorem~\ref{theorem:lbC*} that the optimum to the GTSP provides a lower-bound on the optimal cost to the MT-TSP. 
\end{enumerate}

\vspace{1mm}
\begin{theorem} \label{theorem:lbC*}
Suppose for any edge $(p,q)\in \mathcal{G}$, $l_{pq}$ denotes a lower-bound on the cost of the SFT from $p$ to $q$. Then, the optimal solution for the GTSP on $\mathcal{G}$ obtained in \abbrCstar provides a lower-bound on the optimum of the MT-TSP.
\end{theorem}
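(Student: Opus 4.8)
The plan is to show that every optimal MT-TSP tour induces a feasible GTSP solution on $\mathcal{G}$ whose cost does not exceed the MT-TSP objective; since the GTSP optimum is no larger than the cost of any feasible GTSP solution, this immediately yields the claimed lower-bound. So the first step is to fix an optimal MT-TSP tour and describe it explicitly: the agent departs the depot at time $0$, visits the targets in some order $s_{\sigma(1)}, s_{\sigma(2)}, \dots, s_{\sigma(n)}$, reaches $s_{\sigma(k)}$ at a time $\tau_k$ lying in one of that target's time-windows, and returns to the depot at time $\tau_{n+1}$; the objective value is $\tau_{n+1}$. Because the partitioning step of \abbrCstar subdivides precisely the time-windows of each target, every visit time $\tau_k$ falls inside at least one trajectory-interval of the cluster $C_{\sigma(k)}$; I pick one such interval and call the corresponding node $p_k$ (if $\tau_k$ is a shared endpoint of two intervals, either choice works). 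The induced GTSP tour is then $d \to p_1 \to p_2 \to \cdots \to p_n \to d$, which visits exactly one node from each cluster $C_i$, $i \in S$, and is therefore a feasible GTSP solution.

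The next step is to bound the cost of this tour edge by edge. For a consecutive pair $p_{k-1}, p_k$, the optimal MT-TSP agent departs $\pi_{\sigma(k-1)}(\tau_{k-1})$ at time $\tau_{k-1}$ and feasibly arrives at $\pi_{\sigma(k)}(\tau_k)$ at time $\tau_k$; since $\tau_{k-1}$ lies in the time interval of $p_{k-1}$ and $\tau_k$ lies in that of $p_k$, this is a feasible travel between the two trajectory-intervals, so by the definition of the SFT in Section~\ref{sec:notation} the optimal value of the SFT from $p_{k-1}$ to $p_k$ is at most $\tau_k - \tau_{k-1}$, and hence $l_{p_{k-1},p_k} \le \tau_k - \tau_{k-1}$ by the hypothesis of the theorem. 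The same reasoning applies to the two depot edges: the travel from $d$ at time $0$ to $\pi_{\sigma(1)}(\tau_1)$ is feasible, giving $l_{d,p_1} \le \tau_1$, and the travel from $\pi_{\sigma(n)}(\tau_n)$ to $d$ arriving at time $\tau_{n+1}$ is feasible, giving $l_{p_n,d} \le \tau_{n+1} - \tau_n$. Summing these telescoping bounds, the cost of the induced GTSP tour is at most $\tau_1 + \sum_{k=2}^{n}(\tau_k - \tau_{k-1}) + (\tau_{n+1} - \tau_n) = \tau_{n+1}$, i.e. at most the MT-TSP objective. Since the GTSP optimum is at most the cost of this particular feasible tour, and since the argument applies to an arbitrary optimal MT-TSP tour, the GTSP optimum is at most the MT-TSP optimum, as claimed.

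I expect the only real subtlety — more a point to state carefully than an obstacle — to be justifying that the relaxation is genuinely a relaxation at each visit: in the MT-TSP the agent arrives at and departs from the same trajectory-point $\pi_{\sigma(k)}(\tau_k)$, which is a degenerate special case of the discontinuous behavior that \abbrCstar permits (arrive anywhere in the interval, depart from anywhere in the same interval), so no slack is lost and the per-edge inequality is exactly what the SFT definition delivers. Minor care is also needed with the depot conventions (fixed departure time $0$, and the return edge arriving at $\tau_{n+1} \ge \tau_n$) and with the tie-breaking when a visit time coincides with an interval boundary, but these are routine. A closing remark can note that this argument is independent of which of the edge-cost algorithms in Section~\ref{sec:edge_cost} is used, so long as each produces a valid lower-bound on the corresponding SFT cost.
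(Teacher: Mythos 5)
Your proposal is correct and follows essentially the same argument as the paper's proof: map the optimal MT-TSP tour to a feasible GTSP tour on $\mathcal{G}$, bound each edge cost by the corresponding travel duration via the SFT lower-bound hypothesis, and telescope. The extra care you take in selecting the node containing each visit time and in handling the depot conventions is a welcome elaboration of details the paper leaves implicit, but it does not change the route.
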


\begin{proof}
Consider an optimal solution to the MT-TSP. Let the sequence of nodes in $\mathcal{G}$ visited by this solution be $S^*:=({d,i_1, \cdots, i_n,d})$ and the corresponding arrival times be $({0,t_1, \cdots, t_n,t_{tour}})$. In this solution, the agent travels from $d$ to $\pi_{i_1}(t_1)$, then travels from $\pi_{i_k}(t_k)$ to $\pi_{i_{k+1}}(t_{k+1})$ for $k = 1, \cdots, n-1$ and finally travels from $\pi_{i_n}(t_n)$ to $d$ at speed $v_{max}$. The time taken by the agent to complete the tour is $t_{tour} = t_1 + \sum_{k=1}^{n-1} (t_{k+1} - t_{k}) + (t_{tour} - t_n)$. For any pair of adjacent nodes $(p,q)$ visited by the agent in this optimal solution, since $l_{pq}$ is a lower-bound on the SFT cost from $p$ to $q$, $l_{di_1} \leq t_1 $, $l_{i_ki_{k+1}} \leq t_{k+1}-t_k $ for $k = 1, \cdots, n-1$ and $l_{i_nd} \leq t_{tour} -t_{n}$. Therefore, for the sequence of nodes $S^*$ in the optimal solution, $l_{di_1} + \sum_{k=1}^{n-1}l_{i_ki_{k+1}} + l_{i_nd} \leq t_{tour}$. Since $S^*$ is also a feasible solution to the GTSP, the optimal cost obtained by solving the GTSP must be at most equal to $l_{di_1} + \sum_{k=1}^{n-1}l_{i_ki_{k+1}} + l_{i_nd} \leq t_{tour}$. Hence proved.
\end{proof}
\begin{remark} \label{remark:C*lb}
 As the number of partitions of the time-windows corresponding to the targets tend to infinity, the size of each time interval ($\Delta$) tends to 0 and that of the discontinuities in the agent's path tend to disappear. Since for any partition of the time-windows, \abbrCstar provides a lower-bound, the optimal cost of the relaxed MT-TSP converges asymptotically to the optimal cost of the MT-TSP as $\Delta\rightarrow 0$.
\end{remark}

\section{Algorithms for Computing Travel Costs}
\label{sec:edge_cost}

Given two trajectory-intervals (or nodes\footnote{If a node is generated from a depot, the algorithms in section \ref{sec:edge_cost} can still be applied by treating the depot as a stationary target.}) $p:=$\tri and $q:=$\trj, this section presents four bounding algorithms to estimate the travel cost $l_{pq}$ such that $l_{pq}$ is at most equal to the optimal SFT cost from \tri to \trj. Three of the algorithms can be applied to generic target trajectories while the fourth algorithm is specially optimized and tailored for piecewise-linear target trajectories. Before we present our algorithms, we introduce two conditions that, if satisfied, will address the trivial cases that do not require further optimization.

\begin{theorem} \label{thm:intintfeas}
 If the travel from $\pi_{i}(\underline{t}_i)$ to $\pi_{j}(\overline{t}_j)$ is not feasible, then the travel from \tri to \trj is not feasible, and vice-versa. 
\end{theorem}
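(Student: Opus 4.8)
The plan is to establish the two implications separately, since one direction is immediate and the other requires only a short geometric construction. First I would record the precise feasibility condition: because the plane is obstacle-free and the only constraint on the agent is the speed cap $v_{max}$, a travel from $\pi_i(t)$ to $\pi_j(t')$ is feasible if and only if $t'\ge t$ and $\lVert \pi_i(t)-\pi_j(t')\rVert \le v_{max}\,(t'-t)$. I would also note the elementary fact that, since each target moves at a constant speed along a rectifiable trajectory, the straight-line distance between two of its trajectory-points is bounded by the arc length between them, i.e.\ $\lVert \pi_i(t_1)-\pi_i(t_2)\rVert \le v_i\,\lvert t_1-t_2\rvert$ and likewise for $j$; this holds for both the piecewise-linear and Dubins trajectories considered in the paper.

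The ``vice-versa'' direction is trivial: if the travel from $\pi_i(\underline{t}_i)$ to $\pi_j(\overline{t}_j)$ is feasible, then taking $t=\underline{t}_i\in$ \timei and $t'=\overline{t}_j\in$ \timej exhibits a feasible travel from \tri to \trj, so infeasibility of the latter forces infeasibility of the former. For the remaining direction I would argue the contrapositive: assume a feasible travel exists from \tri to \trj, i.e.\ there are $t\in$ \timei and $t'\in$ \timej with $t\le t'$ and $\lVert\pi_i(t)-\pi_j(t')\rVert\le v_{max}(t'-t)$. Then $\underline{t}_i\le t\le t'\le\overline{t}_j$, so the required arrival time does not precede the departure time, and by the triangle inequality together with the arc-length bound and $v_i,v_j\le v_{max}$,
\begin{align*}
\lVert\pi_i(\underline{t}_i)-\pi_j(\overline{t}_j)\rVert
&\le \lVert\pi_i(\underline{t}_i)-\pi_i(t)\rVert+\lVert\pi_i(t)-\pi_j(t')\rVert+\lVert\pi_j(t')-\pi_j(\overline{t}_j)\rVert\\
&\le v_{max}\big((t-\underline{t}_i)+(t'-t)+(\overline{t}_j-t')\big)=v_{max}(\overline{t}_j-\underline{t}_i),
\end{align*}
which is exactly the feasibility condition for the travel from $\pi_i(\underline{t}_i)$ to $\pi_j(\overline{t}_j)$. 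Equivalently, one can exhibit the witnessing motion directly: the agent shadows target $i$ from time $\underline{t}_i$ to $t$, then performs the assumed feasible travel from $\pi_i(t)$ to $\pi_j(t')$, then shadows target $j$ from $t'$ to $\overline{t}_j$; each shadowing leg is valid precisely because $v_i,v_j\le v_{max}$.

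I do not expect a genuine obstacle here. The one place that needs care is the arc-length bound $\lVert\pi_i(t_1)-\pi_i(t_2)\rVert\le v_i\lvert t_1-t_2\rvert$, which is where the constant-speed assumption and the rectifiable structure of the trajectories actually enter, and stating the feasibility condition cleanly enough that the triangle-inequality chain lands exactly on it. The standing hypothesis $v_{max}>v_i$ for every target is what makes both the shadowing argument and the final inequality go through.
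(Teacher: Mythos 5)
Your proof is correct and takes essentially the same approach as the paper: the ``vice-versa'' direction is handled identically, and your shadowing construction (follow target $i$ from $\underline{t}_i$ to $t$, perform the assumed feasible travel, follow target $j$ from $t'$ to $\overline{t}_j$) is exactly the paper's argument, with your triangle-inequality chain being just an algebraic restatement of it. No gaps.
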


\begin{proof}
We provide a proof by contraposition. If travel from $\pi_{i}(\underline{t}_i)$ to $\pi_{j}(\overline{t}_j)$ is feasible, then it readily follows that the travel from \tri to \trj is feasible. Now, let us show the other direction. If the travel from \tri to \trj is feasible, then there is a time $t^*_i \in$ \timei and a time $t^*_j \in$ \timej such that the travel from $\pi_i(t^*_i)$ to $\pi_j(t^*_j)$ is feasible. Let $v^*$ denote the speed of the agent during this travel. Since the agent can match the speed of any target, it can travel along the trajectory of target $i$ from $\pi_i(\underline{t}_i)$ to $\pi_i(t^*_i)$ at speed $v_i$, then travel from $\pi_i(t^*_i)$ to $\pi_j(t^*_j)$ at speed $v^*$, and finally travel along the trajectory of target $j$ from $\pi_j(t^*_j)$ to $\pi_j(\overline{t}_j)$ at speed $v_j$. As a result, the travel from  $\pi_{i}(\underline{t}_i)$ to $\pi_{j}(\overline{t}_j)$ is feasible. 
\end{proof}

\begin{theorem}\label{thm:optimal}
    If travel from $\pi_{i}(\overline{t}_i)$ to $\pi_{j}(\underline{t}_j)$ is feasible, then $\underline{t}_j-\overline{t}_i$ is the optimal cost of SFT from \tri to \trj.
\end{theorem}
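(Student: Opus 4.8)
The plan is to establish the optimal value by proving two matching inequalities for the SFT program, whose feasible pairs are those $(t,t')$ with $t\in[\underline{t}_i,\overline{t}_i]$, $t'\in[\underline{t}_j,\overline{t}_j]$, and travel from $\pi_i(t)$ to $\pi_j(t')$ feasible, and whose objective is $t'-t$.

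First I would show the optimum is at most $\underline{t}_j-\overline{t}_i$ by exhibiting an explicit feasible point. Take $t=\overline{t}_i$, which lies in $[\underline{t}_i,\overline{t}_i]$, and $t'=\underline{t}_j$, which lies in $[\underline{t}_j,\overline{t}_j]$. By hypothesis, travel from $\pi_i(\overline{t}_i)$ to $\pi_j(\underline{t}_j)$ is feasible, so $(\overline{t}_i,\underline{t}_j)$ is a feasible pair and its objective value $\underline{t}_j-\overline{t}_i$ upper-bounds the SFT optimum. (Since feasible travel requires arrival time at least departure time, this also gives $\underline{t}_j\ge\overline{t}_i$, so the claimed cost is a well-defined non-negative number.)

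Second I would show the optimum is at least $\underline{t}_j-\overline{t}_i$ using only the box constraints: for any feasible pair $(t,t')$ we have $t\le\overline{t}_i$ and $t'\ge\underline{t}_j$, hence $t'-t\ge\underline{t}_j-\overline{t}_i$. Thus every feasible objective value is bounded below by $\underline{t}_j-\overline{t}_i$. Combining the two bounds yields that the SFT optimum equals $\underline{t}_j-\overline{t}_i$, attained at $(\overline{t}_i,\underline{t}_j)$.

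There is no real obstacle here; the lower bound is immediate because $\overline{t}_i$ is the latest admissible departure and $\underline{t}_j$ the earliest admissible arrival, so no feasible pair can do better than pairing them. The only substantive point is that the hypothesis is exactly what certifies this trivial lower bound is achievable by a genuinely feasible travel — without it the SFT program could be infeasible — so I would make sure the write-up states that feasibility of travel from $\pi_i(\overline{t}_i)$ to $\pi_j(\underline{t}_j)$ makes the statement "$\underline{t}_j-\overline{t}_i$ is the optimal cost" meaningful rather than vacuous.
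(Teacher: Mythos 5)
Your proof is correct and follows essentially the same route as the paper: the box constraints $t\le\overline{t}_i$, $t'\ge\underline{t}_j$ give the trivial lower bound $\underline{t}_j-\overline{t}_i$, and the hypothesis certifies that this bound is attained by the feasible pair $(\overline{t}_i,\underline{t}_j)$. Your additional remark that feasibility forces $\underline{t}_j\ge\overline{t}_i$ is a nice touch but not a departure from the paper's argument.
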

\begin{proof}
    Note that the objective of SFT is to find a feasible solution such that $t_j-t_i$ is minimized subject to $t_j\in$\timej and $t_i\in$\timei. Therefore, a trivial lower-bound on the optimal cost to this SFT problem is  $\underline{t}_j-\overline{t}_i$. If travel from $\pi_{i}(\overline{t}_i)$ to $\pi_{j}(\underline{t}_j)$ is feasible, its travel cost will be equal to $\underline{t}_j-\overline{t}_i$ which matches the lower-bound. Hence, $\underline{t}_j-\overline{t}_i$ must be the optimal cost of the SFT from \tri to \trj.
\end{proof}

Based on the above theorems, we first check if the travel from $\pi_{i}(\underline{t}_i)$ to $\pi_{j}(\overline{t}_j)$ is feasible. If this condition is not satisfied, then from Theorem \ref{thm:intintfeas},  travel from \tri to \trj is also not feasible. In this case, each of the algorithms return a very large value for $l_{pq}$ and no further computations are necessary. Next, if travel from $\pi_{i}(\overline{t}_i)$ to $\pi_{j}(\underline{t}_j)$ is feasible, each algorithm sets $l_{pq}$ to be equal to  $\underline{t}_j-\overline{t}_i$ and no further optimization is required (from Theorem \ref{thm:optimal}). Therefore, we assume henceforth that travel from $\pi_{i}(\underline{t}_i)$ to $\pi_{j}(\overline{t}_j)$ is feasible, while travel from $\pi_{i}(\overline{t}_i)$ to $\pi_{j}(\underline{t}_j)$ is not feasible. Based on this assumption, we present our bounding algorithms. We obtain a variant of \abbrCstar based on the choice of the bounding algorithm used; consequently, each of the following subsections is named accordingly to match the corresponding \abbrCstar variant.

\subsection{\abbrCstardash} \label{simple lb}

\vspace{1mm}
We derive a simple bound based solely on the timing constraints. Regardless of the feasibility of travel from $\pi_{i}(\overline{t}_i)$ to $\pi_{j}(\underline{t}_j)$, as discussed in the proof of Theorem \ref{thm:optimal}, $\underline{t}_j - \overline{t}_i$ serves as a valid lower-bound for the SFT cost. Additionally, we know that the optimal SFT cost is always non-negative. Hence, in this variant of \abbrCstar, we set $l_{pq} := \max\{\underline{t}_j - \overline{t}_i, 0\}$.

\subsection{\abbrCstar-Geometric} \label{sec:C*geometric}

\vspace{1mm}
In this variant of \abbrCstar, we ignore the motion constraints of the target trajectories and only consider the shortest Euclidean distance between the points traveled in \tri and \trj. Specifically, let all the points traveled in \tri and \trj be denoted as $S_1$ and $S_2$ respectively. Let the shortest Euclidean distance between the sets $S_1$ and $S_2$ be defined as $dist(S_1,S_2):=\min_{x_1\in S_1, x_2\in S_2}\lVert x_1-x_2 \rVert _2$. Here, we set $l_{pq}:=\frac{dist(S_1,S_2)}{v_{max}}$ which is clearly a lower-bound on the SFT cost.

For the target trajectories considered in this paper, each segment of a trajectory is either a straight line or an arc with a given turning radius for the target. If \tri and \trj correspond to $k_i$ and $k_j$ segments, respectively, we compute $dist(S_1, S_2)$ by finding the shortest distance between any segment in $S_1$ and any segment in $S_2$, and then computing the minimum of all these optima. This requires $O(k_i k_j)$ computations. 

\subsection{\abbrCstar-Sampling}
This variant of \abbrCstar first partitions \timei into \abbrSamplingPar uniform sub-intervals, where \abbrSamplingPar is a sampling parameter. It then estimates the optimal SFT cost for departing from each sub-interval and selects the minimum. Formally, let the $p^{th}$ sub-interval of \timei be $[\underline{t}_{i,p}, \overline{t}_{i,p}]$. The optimal SFT cost is then given by $ \min_{p=1}^{k} \min_{t\in [\underline{t}_{i,p}, \overline{t}_{i,p}]} {( {\mathcal{E}}(t)-t)}$.


 Since ${\mathcal{E}}(t)$ monotonically increases with $t$ (later proved in Theorem \ref{thm:monotonic}), the optimal SFT cost is at least equal to $l_{pq}:=\min_{p=1}^{k} ({\mathcal{E}}(\underline{t}_{i,p})-\overline{t}_{i,p}) $. If travel is infeasible from $\pi_i(\underline{t}_{i,p})$ to $\pi_j(\bar{t}_j)$, then travel from $\pi_i(t)$ for any time $t\geq \underline{t}_{i,p}$ to $\pi_j$ is also infeasible (from Theorem \ref{thm:intintfeas}); in this case, ${\mathcal{E}}(\underline{t}_{i,p})$ is set to a very large value. Otherwise, if the trajectories consist of straight lines, ${\mathcal{E}}(\underline{t}_{i,p})$ is relatively easy to find using the formulae provided in the appendix. The number of steps required to compute $l_{pq}$ in this case is $O(kk_j)$ where $k_j$ is the number of line segments in $\pi_j$.

If the trajectories consist of more generic segments where ${\mathcal{E}}(\underline{t}_{i,p})$ is difficult to compute directly, we can bound ${\mathcal{E}}(\underline{t}_{i,p})$ using the following approach: We know that if the agent starts at $\pi_i(\underline{t}_{i,p})$, it cannot arrive at $\pi_j$ at any time earlier than ${\mathcal{E}}(\underline{t}_{i,p})$. Therefore, if there is a sub-interval $[\mathcal{E}_{lp},\mathcal{E}_{up}] \subseteq$ \timej such that the agent can start from $\pi_i(\underline{t}_{i,p})$ and reach $\pi_j(\mathcal{E}_{up})$ but cannot reach $\pi_j(\mathcal{E}_{lp})$, then ${\mathcal{E}}(\underline{t}_{i,p})\in [\mathcal{E}_{lp},\mathcal{E}_{up}]$. To refine this bound, we iteratively partition \timej into smaller intervals using binary search, ensuring that the above condition is satisfied and $|\mathcal{E}_{up}-\mathcal{E}_{lp}|\leq \epsilon$ where $\epsilon$ is another sampling parameter that can be adjusted. Therefore, in this general case, we define $l_{pq}:=\min_{p=1}^{k} ({\mathcal{E}}_{lp}-\overline{t}_{i,p})$. The sampling algorithm in this case will require $O(k\log_2{\frac{\Delta}{\epsilon}})$ steps where $\Delta$ is the size of \timej.

\subsection{\abbrCstar-Linear}\label{sec:C*linear}

This variant of  \abbrCstar finds the optimum for SFT, and can be applied when both trajectory intervals,  \tri and \trj, correspond to piecewise-linear segments. If \tri and \trj each correspond to only one line segment, we can simply identify the stationary or boundary points, verify feasibility, and select a solution that yields the optimal cost (derivations for finding the stationary points are provided in the appendix \ref{sec:append_SFT}). However, if either \tri or \trj corresponds to more than one line segment, considering all combinations of segments like in sub-section \ref{sec:C*geometric} will require $O(k_ik_j)$ steps. In this subsection, we present an efficient algorithm that reduces the complexity to  $O(k_i + k_j)$ steps.


\vspace{1mm}
Before we present our algorithm, we mention why solutions to the EFAT and LFDT problems posed in Section \ref{sec:notation} play a crucial role in our approach here. In a EFAT solution, $\mathcal{E}(t_i)$ denotes the earliest feasible time of arrival to the trajectory $\pi_j$ from trajectory point $\pi_i(t_i)$. This implies that the agent cannot reach $\pi_j$ at any time earlier than $\mathcal{E}(t_i)$; also, it is sub-optimal to reach $\pi_j$ at any time later than $\mathcal{E}(t_i)$. Similar observations can also be deduced using a LFDT solution which provides the latest feasible time of departure from a trajectory to a trajectory-point. In the appendix, we show the calculations to solve the EFAT and LFDT problems. Solutions for these problems can be used to prune unnecessary parts of \tri and \trj that will never lead to an optimal solution. Specifically, we can ensure that $\mathcal{E}(\underline{t}_i) = \underline{t}_j$ (or $\underline{t}_i = \mathcal{L}(\underline{t}_j$)) and $\mathcal{E}(\overline{t}_i) = \overline{t}_j$ (or $\overline{t}_i = \mathcal{L}(\overline{t}_j)$). Henceforth, we will assume that the trajectory-intervals already satisfy these conditions. We will also assume that the trajectory-intervals don't intersect each other. That is there is no time $t$ in \timei and \timej such that $\pi_i(t)=\pi_j(t)$; otherwise, this is a trivial case and the optimal SFT cost is 0.

\begin{figure}
    \centering
    \includegraphics[width=1\linewidth]{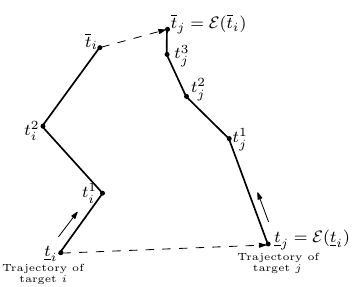}
    \caption{Given target trajectories for {\it AlgoSFT}. In the trajectory of target $i$, there are 3 line-segments and 2 corner points. In the trajectory of target $j$, there are 4 line-segments and 3 corner points. Also, $T_i:=(\underline{t}_i,t_i^1,t_i^2,\overline{t}_i)$ and $T_j:=(\underline{t}_j,t_j^1,t_j^2,t_j^3,\overline{t}_j)$.}
    \label{fig:AlgSFTinput}
\end{figure}


Let $T_i$ and $T_j$ denote lists of selected times (or time instants) corresponding to targets $i$ and $j$. The times in $T_i$ and $T_j$ are automatically sorted in ascending order as new times are added. $T_i$ and $T_j$ are first initialized with all the times corresponding to the corner\footnote{These are break-points where a target trajectory transitions from a line segment to another with a different slope.} trajectory-points in \tri and \trj respectively. In addition, the boundary times $\underline{t}_i, \overline{t}_i$ are added to $T_i$, and similarly, $\underline{t}_j, \overline{t}_j$ are added to $T_j$ (refer to Fig. \ref{fig:AlgSFTinput}). Also, let the $k^{th}$ smallest time in $T_i$ and $T_j$ be referred to as $T_{i}(k)$  and $T_{j}(k)$ respectively. Our algorithm denoted as {\it AlgoSFT} is as follows: 

\begin{enumerate}
    \item For each time $t$ in $T_i$ corresponding to a corner-trajectory point in \tri, find $\mathcal{E}(t)$ and add $\mathcal{E}(t)$ to $T_j$. Similarly, for each time $t$ in $T_j$ corresponding to a corner trajectory-point in \trj, find $\mathcal{L}(t)$ and add $\mathcal{L}(t)$ to $T_i$. At the end of this step, $|T_i|=|T_j|$ (refer to Fig. \ref{fig:AlgoSFTstep1}). 
    \item For $k=1,\cdots, |T_i|-1$, do the following:  Let $\pi_{ik}$ denote the trajectory-sub-interval of \tri corresponding to $[T_{i}(k),T_{i}(k+1)]$. Similarly, let $\pi_{jk}$ denote the trajectory-sub-interval of \trj corresponding to $[T_{j}(k),T_{j}(k+1)]$. Find the SFT cost from $\pi_{ik}$ to $\pi_{jk}$ (using the calculations in the appendix \ref{sec:append_SFT}). Let this cost be denoted as $SFT_k$.
    \item Set $l_{pq}:= \min_{k=1}^{|T_i|-1} SFT_k$.
\end{enumerate}

We will now prove that {\it AlgoSFT} correctly computes the SFT cost from \tri to \trj.

\begin{theorem}\label{thm:monotonic}
Consider any choice of times $t_1,t_2 \in$ \timei such that $t_1<t_2$. Then, $\mathcal{E}(t_1)<\mathcal{E}(t_2)$.
\end{theorem}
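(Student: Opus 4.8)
The plan is to work directly with the scalar feasibility condition that defines the EFAT and to exhibit a strictly positive ``slack'' that separates the two earliest-arrival values. For $k\in\{1,2\}$ define the continuous function
\[
f_k(t) := \lVert \pi_i(t_k)-\pi_j(t)\rVert - v_{max}(t-t_k),
\]
and observe that travel from $\pi_i(t_k)$ to $\pi_j(t)$ is feasible precisely when $t\ge t_k$ and $f_k(t)\le 0$, so $\mathcal{E}(t_k)=\min\{t\ge t_k: f_k(t)\le 0\}$ (a genuine minimum, since $f_k$ is continuous and the feasible set is closed; I assume throughout that the EFAT exists, i.e.\ this set is nonempty, which is the case of interest by the feasibility precheck of Theorem~\ref{thm:intintfeas}).

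First I would dispatch the easy case $\mathcal{E}(t_1)<t_2$: then $\mathcal{E}(t_2)\ge t_2>\mathcal{E}(t_1)$ and we are done. So assume $\mathcal{E}(t_1)\ge t_2$, and write $t^*:=\mathcal{E}(t_1)$; note $t^*\ge t_2>t_1$. The crux is a pointwise comparison of $f_1$ and $f_2$. Because target $i$ traverses its (rectifiable) trajectory at constant speed $v_i$, the straight-line distance is at most the arc length, so $\lVert \pi_i(t_1)-\pi_i(t_2)\rVert \le v_i(t_2-t_1)$. Combining this with the reverse triangle inequality yields, for every $t$,
\[
f_2(t)-f_1(t)=\big(\lVert \pi_i(t_2)-\pi_j(t)\rVert-\lVert \pi_i(t_1)-\pi_j(t)\rVert\big)+v_{max}(t_2-t_1)\ \ge\ (v_{max}-v_i)(t_2-t_1)=:\delta>0,
\]
where $\delta>0$ since $v_{max}>v_i$ and $t_2>t_1$.

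To finish: by minimality of $t^*$ and continuity of $f_1$ we have $f_1(t)>0$ on $[t_1,t^*)$ and $f_1(t^*)=0$, hence $f_1\ge 0$ on $[t_1,t^*]\supseteq[t_2,t^*]$. Therefore $f_2(t)\ge f_1(t)+\delta\ge\delta>0$ for all $t\in[t_2,t^*]$, i.e.\ no feasible travel from $\pi_i(t_2)$ to $\pi_j(t)$ exists for any $t\in[t_2,t^*]$, so $\mathcal{E}(t_2)>t^*=\mathcal{E}(t_1)$, as claimed. The only real obstacle is spotting the correct slack quantity $\delta=(v_{max}-v_i)(t_2-t_1)$ and justifying $\lVert \pi_i(t_1)-\pi_i(t_2)\rVert\le v_i(t_2-t_1)$ from the constant-speed assumption; the rest is the triangle inequality and bookkeeping with the definition of EFAT, including the minor point of absorbing the degenerate cases ($\mathcal{E}(t_1)<t_2$, or trajectories meeting so that $\mathcal{E}(t_1)=t_1$) via the initial split above.
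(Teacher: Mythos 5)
Your proof is correct, but it takes a genuinely different route from the paper's. The paper argues by contradiction with an explicit path construction: assuming $\mathcal{E}(t_1)\geq\mathcal{E}(t_2)$, the agent departing at $t_1$ first catches up to $\pi_i(t_2)$ along target $i$'s trajectory (possible since $v_{max}>v_i$), then follows the EFAT path for the departure at $t_2$, arrives at $\pi_j(\mathcal{E}(t_2))$ strictly before target $j$ does, and finally intercepts target $j$ along $\pi_j$ strictly earlier than $\mathcal{E}(t_2)$ --- contradicting the optimality of $\mathcal{E}(t_1)$. You instead encode feasibility as the scalar condition $f_k(t)\leq 0$ and establish the uniform pointwise gap $f_2(t)-f_1(t)\geq(v_{max}-v_i)(t_2-t_1)>0$ from the reverse triangle inequality together with the arc-length bound $\lVert\pi_i(t_1)-\pi_i(t_2)\rVert\leq v_i(t_2-t_1)$, then conclude that the first zero-crossing of $f_2$ must lie strictly beyond that of $f_1$. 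Your version buys some rigor: it sidesteps the paper's informal final step (that reaching a point of $\pi_j$ ahead of target $j$ lets the agent intercept it earlier, which itself needs a small continuity argument), it handles the degenerate cases ($\mathcal{E}(t_1)<t_2$, intersecting trajectories) cleanly via the initial split, and it produces an explicit slack $\delta$ that quantifies the separation. The paper's construction, in exchange, is more geometric and matches the style of the surrounding results (e.g., the speed-matching trick in Theorem~\ref{thm:intintfeas}). Both arguments hinge on exactly the same two facts: $v_{max}>v_i$ and the constant-speed (hence arc-length) bound on how far $\pi_i$ can drift between $t_1$ and $t_2$.
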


\begin{proof}
We prove this theorem by contradiction. Suppose $\mathcal{E}(t_1)\geq \mathcal{E}(t_2)$. Let the position occupied by target $t_2$ at time $\mathcal{E}(t_2)$ be denoted by $p^*=\pi_j(\mathcal{E}(t_2))$. The agent can travel from $\pi_i(t_1)$ to $\pi_i(t_2)$ at speed $v_{max}$ and then travel from $\pi_i(t_2)$ to $\pi_j(\mathcal{E}(t_2))$ following the EFAT path. Since the agent can travel faster than target $i$, it will reach the location $p^*$ sooner than target $t_j$. This will allow the agent to further travel along $\pi_j$ to intercept $t_j$ sooner than $\mathcal{E}(t_2)$. This implies that we have found a new arrival time for the agent that is less than $\mathcal{E}(t_2)\leq \mathcal{E}(t_1)$. As a result $\mathcal{E}(t_1)$ is not the earliest arrival time to visit $t_j$ which is a contradiction. Hence proved.
\end{proof}

\begin{theorem}\label{thm:fixedpoint}
For any time $t \in$ \timei, $\mathcal{L}(\mathcal{E}(t))=t$. Similarly, for any time $t \in$ \timej, $\mathcal{E}(\mathcal{L}(t))=t$.
\end{theorem}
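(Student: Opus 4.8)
The plan is to prove both identities by the same two‑sided argument, so I describe the first one, $\mathcal{L}(\mathcal{E}(t))=t$ for $t\in[\underline{t}_i,\overline{t}_i]$; the second, $\mathcal{E}(\mathcal{L}(t))=t$, is its mirror image with the roles of $i$ and $j$ (and of the EFAT and LFDT problems) interchanged. I would split the claim into the two inequalities $\mathcal{L}(\mathcal{E}(t))\ge t$ and $\mathcal{L}(\mathcal{E}(t))\le t$.

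The inequality $\mathcal{L}(\mathcal{E}(t))\ge t$ is immediate from the definitions: by definition of $\mathcal{E}(t)$ the travel from $\pi_i(t)$ to $\pi_j(\mathcal{E}(t))$ is feasible, so $t$ is one feasible departure time from $\pi_i$ that reaches the trajectory‑point $\pi_j(\mathcal{E}(t))$; since $\mathcal{L}(\mathcal{E}(t))$ is the \emph{latest} such departure time, it is at least $t$.

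For $\mathcal{L}(\mathcal{E}(t))\le t$ I would argue by contradiction using the strict monotonicity of $\mathcal{E}$ established in Theorem~\ref{thm:monotonic}. Set $s:=\mathcal{L}(\mathcal{E}(t))$ and suppose $s>t$. By definition of $\mathcal{L}$, the travel from $\pi_i(s)$ to $\pi_j(\mathcal{E}(t))$ is feasible, hence the earliest feasible arrival from $\pi_i(s)$ obeys $\mathcal{E}(s)\le\mathcal{E}(t)$. But $s>t$ together with Theorem~\ref{thm:monotonic} forces $\mathcal{E}(s)>\mathcal{E}(t)$, a contradiction; therefore $s\le t$. Combined with the previous paragraph this gives $\mathcal{L}(\mathcal{E}(t))=t$, and the second identity follows symmetrically, invoking the LFDT analogue of Theorem~\ref{thm:monotonic} (i.e.\ that $\mathcal{L}$ is strictly increasing, proved by the mirror of that argument). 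One could instead observe that $\mathcal{E}$ is a strictly increasing surjection of $[\underline{t}_i,\overline{t}_i]$ onto $[\underline{t}_j,\overline{t}_j]$ with $\mathcal{L}$ as its inverse, but proving surjectivity would need a continuity argument that the direct approach above sidesteps.

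The delicate point — and the main obstacle — is the legality of the objects appearing in the contradiction step rather than the logic itself: Theorem~\ref{thm:monotonic} is stated for arguments in $[\underline{t}_i,\overline{t}_i]$, so before applying it to the pair $t,s$ I must verify that $s=\mathcal{L}(\mathcal{E}(t))$ lies in that interval, and likewise that $\mathcal{E}(t)\in[\underline{t}_j,\overline{t}_j]$ so that $\mathcal{L}(\mathcal{E}(t))$ is the intended quantity. These are consequences of the standing pruning assumptions $\mathcal{E}(\underline{t}_i)=\underline{t}_j$, $\mathcal{E}(\overline{t}_i)=\overline{t}_j$, $\mathcal{L}(\underline{t}_j)=\underline{t}_i$, $\mathcal{L}(\overline{t}_j)=\overline{t}_i$ combined with monotonicity: $\underline{t}_j=\mathcal{E}(\underline{t}_i)\le\mathcal{E}(t)\le\mathcal{E}(\overline{t}_i)=\overline{t}_j$, and $\mathcal{L}$ is (weakly) nondecreasing — if one can depart $\pi_i$ at time $\tau$ and reach $\pi_j(a)$, then for any $b\ge a$ one can continue from $\pi_j(a)$ along $\pi_j$ at speed $v_j<v_{max}$ to reach $\pi_j(b)$, so $\tau$ is also feasible for $\pi_j(b)$ — whence $t\le s=\mathcal{L}(\mathcal{E}(t))\le\mathcal{L}(\overline{t}_j)=\overline{t}_i$. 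With these containments in place the contradiction argument applies verbatim.
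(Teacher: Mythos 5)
Your proof is correct, and the first half (the inequality $\mathcal{L}(\mathcal{E}(t))\geq t$) is identical to the paper's. Where you diverge is in the reverse inequality: the paper rules out $\mathcal{L}(\mathcal{E}(t))>t$ by re-running the physical chasing argument --- the agent departs $\pi_i(t)$, catches up to $\pi_i(\mathcal{L}(\mathcal{E}(t)))$ at speed $v_{max}$, follows the EFAT path, arrives at the point $\pi_j(\mathcal{E}(t))$ before target $j$ does, and then intercepts $j$ earlier than $\mathcal{E}(t)$, contradicting the definition of $\mathcal{E}(t)$. You instead use Theorem~\ref{thm:monotonic} as a black box: setting $s=\mathcal{L}(\mathcal{E}(t))$, feasibility of the travel from $\pi_i(s)$ to $\pi_j(\mathcal{E}(t))$ gives $\mathcal{E}(s)\leq\mathcal{E}(t)$, while $s>t$ and strict monotonicity would give $\mathcal{E}(s)>\mathcal{E}(t)$. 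Your route is cleaner in that it avoids repeating the interception construction (the paper itself only gestures at it with ``similar to the argument in Theorem~\ref{thm:monotonic}''), and your care about whether $s$ and $\mathcal{E}(t)$ lie in \timei and \timej respectively is a legitimate point the paper glosses over --- though since \abbrEFAT and \abbrLFDT are defined relative to the whole trajectories $\pi_i$ and $\pi_j$ rather than the intervals, and the proof of Theorem~\ref{thm:monotonic} never actually uses the interval restriction, this is a matter of hygiene rather than a substantive gap in either version. The paper's direct argument has the minor advantage of not depending on the interval hypothesis of Theorem~\ref{thm:monotonic} at all; yours has the advantage of being shorter and of making the logical dependence on monotonicity explicit.
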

\begin{proof}
We will prove that for any $t \in$ \timei, $\mathcal{L}(\mathcal{E}(t))=t$; the other result can be proved by similar arguments. $\mathcal{L}(\mathcal{E}(t))$ denotes the latest departure time available to leave $\pi_i$ and arrive at $p^*=\pi_j(\mathcal{E}(t))$. Therefore, $\mathcal{L}(\mathcal{E}(t))\geq t$. Now, suppose $\mathcal{L}(\mathcal{E}(t)) > t$. The agent can travel from $\pi_i(t)$ to $\pi_i(\mathcal{L}(\mathcal{E}(t)))$ at speed $v_{max}$ and then travel from $\pi_i(\mathcal{L}(\mathcal{E}(t)))$ to $\pi_j(\mathcal{E}(t))$ following the EFAT path. Similar to the argument in Theorem \ref{thm:monotonic}, if $\mathcal{L}(\mathcal{E}(t)) > t$ is true, the agent will be able to visit target $t_j$ sooner than $\mathcal{E}(t)$ which is not possible. Hence, the only possibility is that $\mathcal{L}(\mathcal{E}(t))= t$.
\end{proof}

\begin{theorem}
    {\it AlgoSFT} correctly finds the optimal SFT cost from \tri to \trj in the order of $k_i+k_j$ steps where $k_i$ and $k_j$ denote the number of line segments in \tri and \trj respectively.
\end{theorem}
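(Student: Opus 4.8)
The plan is to prove correctness first and the running-time bound second, working under the standing assumptions already in force: the trivial cases have been removed via Theorems~\ref{thm:intintfeas}–\ref{thm:optimal}, the trajectory-intervals have been pruned so that $\mathcal{E}(\underline t_i)=\underline t_j$, $\mathcal{E}(\overline t_i)=\overline t_j$ (and symmetrically $\mathcal{L}(\underline t_j)=\underline t_i$, $\mathcal{L}(\overline t_j)=\overline t_i$), and $\pi_i,\pi_j$ do not intersect. The first step is to reduce the SFT from \tri to \trj to the single-variable problem $g^\ast:=\min_{t\in[\underline t_i,\overline t_i]}(\mathcal{E}(t)-t)$. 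Indeed, in any feasible pair $(t_i,t_j)$ with $t_i\in$\timei, replacing $t_j$ by $\mathcal{E}(t_i)$ keeps the travel feasible and does not increase $t_j-t_i$. Moreover, Theorem~\ref{thm:monotonic} gives strict monotonicity (hence injectivity) of $\mathcal{E}$ on \timei, while Theorem~\ref{thm:fixedpoint} gives $\mathcal{E}(\mathcal{L}(s))=s$ for every $s\in$\timej (hence surjectivity onto \timej); so $\mathcal{E}$ restricted to \timei is a strictly increasing bijection onto \timej, and in particular continuous. Together with the pruning conditions at the endpoints, this identifies the set of feasible departure times with \timei and shows the optimal SFT cost is exactly $g^\ast$.

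The second step is an \emph{alignment lemma}: after Step~1 of \abbrsftAlg, $|T_i|=|T_j|$ and $\mathcal{E}(T_i(k))=T_j(k)$ for every $k$. To see this, apply $\mathcal{E}$ to the final list $T_i$: the corner times of $\pi_i$ map to exactly the $\mathcal{E}$-images Step~1 inserted into $T_j$; the boundary times $\underline t_i,\overline t_i$ map to $\underline t_j,\overline t_j$ by the pruning conditions; and each inserted time $\mathcal{L}(c)$, for $c$ a corner of $\pi_j$, maps to $c$ by Theorem~\ref{thm:fixedpoint}. Hence $\mathcal{E}(T_i)=T_j$ as sets, and since $\mathcal{E}$ is a strictly increasing bijection \timei$\to$\timej it is an order isomorphism of these finite lists, giving $|T_i|=|T_j|$, the indexed equality, and the consequence that $\mathcal{E}$ maps $[T_i(k),T_i(k+1)]$ \emph{onto} $[T_j(k),T_j(k+1)]$. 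Because every corner of $\pi_i$ is in $T_i$ and every corner of $\pi_j$ is in $T_j$, each $\pi_{ik}$ and each $\pi_{jk}$ is a single line segment, so the closed-form single-segment routine of Appendix~\ref{sec:append_SFT} applies; for a departure $t\in[T_i(k),T_i(k+1)]$ the earliest feasible arrival inside $\pi_{jk}$ is no earlier than the global earliest $\mathcal{E}(t)$, and by the onto property $\mathcal{E}(t)\in[T_j(k),T_j(k+1)]$, so that earliest arrival equals $\mathcal{E}(t)$. Therefore $SFT_k=\min_{t\in[T_i(k),T_i(k+1)]}(\mathcal{E}(t)-t)$, and since these sub-intervals tile \timei, $\min_k SFT_k=g^\ast$, which is the optimal SFT cost.

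For the complexity, $|T_i|=|T_j|$ equals the number of corners of $\pi_i$ plus the number of corners of $\pi_j$ plus $O(1)$, i.e.\ $O(k_i+k_j)$, so Steps~2 and 3 cost $O(k_i+k_j)$. In Step~1 the corner times of $\pi_i$ are processed in increasing order and, by Theorem~\ref{thm:monotonic}, their EFAT values are increasing; maintaining a single monotone pointer into the segments of $\pi_j$ amortizes all these evaluations to $O(k_i+k_j)$, and symmetrically the LFDT evaluations amortize to $O(k_i+k_j)$ using a monotone pointer into the segments of $\pi_i$. I expect the main obstacle to be the alignment lemma together with the onto property: verifying that the two $\mathcal{E}$/$\mathcal{L}$ insertion passes interleave \emph{exactly}, so that simultaneously every matched pair $(\pi_{ik},\pi_{jk})$ is segment-to-segment and restricting the search to these matched sub-intervals loses no optimum — both rest on the bijectivity/monotonicity of $\mathcal{E}$ and on the fixed-point identity $\mathcal{L}=\mathcal{E}^{-1}$. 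The amortized monotone-sweep argument needed to keep Step~1 at $O(k_i+k_j)$ rather than the naive $O(k_ik_j)$ is the secondary point requiring care.
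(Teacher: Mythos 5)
Your proposal is correct and follows essentially the same route as the paper's proof: you establish that after Step~1 the lists $T_i$ and $T_j$ align so that $T_j(k)=\mathcal{E}(T_i(k))$, using the strict monotonicity of $\mathcal{E}$ (Theorem~\ref{thm:monotonic}) and the fixed-point identity $\mathcal{E}(\mathcal{L}(t))=t$ (Theorem~\ref{thm:fixedpoint}), and then conclude that the optimum is captured by $\min_k SFT_k$ over the matched single-segment sub-interval pairs. Your amortized monotone-pointer argument for keeping Step~1 at $O(k_i+k_j)$ rather than $O(k_ik_j)$ is a useful extra detail that the paper leaves implicit when asserting the running-time bound.
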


\begin{proof}
Consider any time $\bar{t}$ in the list $T_j$ at the end of step 1 of {\it AlgoSFT}. Either $\bar{t}=\mathcal{E}(t)$ for some $t \in T_i$ or there is a time $t\in T_i$ such that $\mathcal{L}(\bar{t})=t$ in which case $\mathcal{E}(t)=\bar{t}$ from Theorem \ref{thm:fixedpoint}. Therefore, for any time $\bar{t}\in T_j$, there is a $t\in T_i$ such that $\mathcal{E}(t)=\bar{t}$. Similarly, for any $t\in T_i$, there is a $\bar{t}\in T_j$ such that $\mathcal{E}(t)=\bar{t}$.

If $T_i(k)$ and $T_j(k)$ denote the $k^{th}$ smallest time in $T_i$ and $T_j$ respectively, Theorem \ref{thm:monotonic} implies that $T_j(k)=\mathcal{E}(T_i(k))$. Suppose the time to depart in an optimal SFT solution from \tri to \trj is $t^*\in$ \timei. Then, for some $k^*\in 1,\cdots, |T_i|-1$, $t^*\in [T_i(k^*),T_i(k^*+1)]$. Applying Theorem \ref{thm:monotonic}, it then must follow that $\mathcal{E}(t^*)\in [T_j(k^*),T_j(k^*+1)]$. Therefore, using the notations in step 2 of {\it AlgoSFT}, the optimal SFT cost from \tri to \trj must be equal $\min_{k=1}^{|T_i|-1} SFT_k$. 

The computation of the optimal SFT cost essentially involves solving $|T_i| - 1$ optimization problems, each requiring the calculation of a fixed number of stationary or boundary points and checking their feasibility. Additionally, we can verify that $|T_i| = |T_j| = k_i + k_j$. Therefore, the number of steps required to implement {\it AlgoSFT} is in the order of $k_i + k_j$. Hence proved. \end{proof}

\begin{figure}
    \centering
    \includegraphics[width=.8\linewidth]{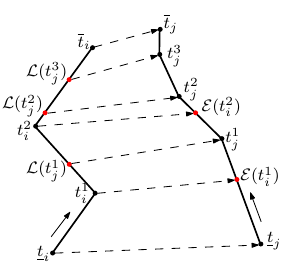}
    \caption{Target trajectories from Fig. \ref{fig:AlgSFTinput} showing the updated lists of times in $T_i$ and $T_j$. At the end of the step 1 of {\it AlgoSFT}, $|T_i|=|T_j|=7$.}
    \label{fig:AlgoSFTstep1}
\end{figure}

\color{black}


\section{Numerical Results}\label{results}
\subsection{Test Settings and Instance Generation} \label{instances}
All the tests were run on a laptop with an Intel Core i7-7700HQ 2.80GHz CPU, and 16GB RAM. For {\blue all the \abbrCstar variants, relaxing MT-TSP and constructing the graph, including computing the edge costs, were all implemented using Python 3.11.6. An exact branch-and-cut solver, written in C++ and utilizing CPLEX 22.1, was used to solve an integer program for the GTSP, to optimality. For the special case where targets move along straight lines, the SOCP-based formulation by Stieber and F{\"u}genschuh in \cite{stieber2022}, with the objective modified to minimize travel time, was used as our baseline. This formulation was implemented in CPLEX 22.1 IDE, which uses OPL. More details on this formulation can be found in the appendix (section~\ref{appendix})}. The CPLEX parameter {\it EpGap}\footnote{Relative tolerance on the gap between the best solution objective and the best bound found by the solver.} was set to be 1e-04 and the CPLEX parameter {\it TiLim}\footnote{Time limit before which the solver terminates.} was set to 7200s {\blue for our algorithms as well as the baseline}.

\vspace{1mm}
{\blue A total of 90 instances were generated, with 30 \emph{\abbrLinInst} instances where targets move along lines, 30 \emph{\abbrPWLinInst} instances where targets move along piecewise-linear paths, and 30 \emph{\abbrDubinsInst} instances where targets move along Dubins curves made of straight lines and circular arcs. 
The \abbrDubinsInst instances are considered separately in section~\ref{section:generic}. For a given instance type, we generated three sets of 10 instances: one set with 5 targets, another set with 10 targets, and a third set with 15 targets. The instances were defined by the number of targets $n$, a square area of fixed size $100 \, units$ containing the start locations of the targets, a fixed time horizon $T = 100 \, secs$ over which the target trajectories are defined, the depot location fixed at the bottom-left corner of the square area with coordinates $(10,10)$, a fixed maximum agent speed of $v_{max}=4 \, units/sec$, and a set of randomly generated trajectories for the $n$ targets, where each target moves at a constant speed within $[0.5,1] \, units/sec$. Each target was also assigned up to 2 time-windows, whose total duration adds up to $20 \, secs$. Note that for the \abbrLinInst instances, we assigned only 1 time-window for each target. We also ensured that the paths traversed by the targets were all confined within the square area. This was so that the baseline SOCP, which relies on these assumptions, could be used.}


\vspace{1mm}
{\blue The time-windows for any given instance were defined as follows. First, the time-window for each target was set to be the entire time horizon, and a feasible solution was found using the algorithm in section~\ref{Sec:feasible solution}. Second, each target was assigned a primary time-window of duration $15 \, secs$, which contains the time that target was visited by the agent in the feasible solution. Third, a secondary time-window of 5 $secs$ that does not intersect with the primary time-window, was randomly assigned to each of the targets as well. For the \abbrLinInst instances, only the primary time-window was assigned to each target, but with an increased duration of $20 \, secs$.}

\vspace{1mm}
{\blue Before proceeding further, note that when finding feasible solutions or when running the \abbrCstar variants, we partition the time-windows for each target into equal intervals of size $\Delta=0.625 \, secs$. Since for any target, the duration of each time-window is an integer multiple of $5 \, secs$, and the total duration from all the time-windows sums to $20 \, secs$, we get a total of $32$ intervals per target. This is always true unless otherwise specified. Also, when using \abbrCstarSampling, the sampling parameter \abbrSamplingPar is always set to 10, and the gap tolerance \abbrSamplingGapTol is set to 0.05.}

\subsection{Finding Feasible Solutions}\label{Sec:feasible solution}
{\blue We evaluate the quality of bounds from the \abbrCstar variants, based on how far, they deviate on average, from feasible solution costs. This section briefly discusses how feasible solutions for the MT-TSP can be obtained by first transforming MT-TSP into a corresponding GTSP and then finding feasible solutions for the GTSP.}


\vspace{1mm}
{\blue The time-windows for each target are first sampled into equally spaced time-instants. The trajectory-points corresponding to these time-instants are then found. A directed graph $\mathcal{G}$ is then constructed, with the vertex set defined as the depot and the set of all trajectory-points found. All the vertices corresponding to a given target are clustered together. If the agent can travel feasibly from a vertex belonging to a cluster to another vertex belonging to a different cluster, a directed edge is added, with the cost being the difference between the time-instants corresponding to the destination vertex and the start vertex. If the destination vertex is the depot, then the edge cost is the time taken by the agent to reach the depot from the start vertex by moving at its maximum speed. If travel between two vertices is not feasible, the cost of the edge between the vertices is set to a large value to denote infeasibility.}


\vspace{1mm}
A feasible solution for the MT-TSP can now be obtained by finding a directed edge cycle which starts at the depot vertex, visits exactly one vertex from each cluster, and returns to the depot vertex. Note that this is simply, the problem of finding a feasible solution for the GTSP defined on graph $\mathcal{G}$. One way to solve this problem is to first transform the GTSP into an Asymmetric TSP (ATSP) using the transformation in \cite{noon1993efficient}, and then find feasible solutions for the ATSP using an LKH solver \cite{helsgaun2000effective} or some other TSP heuristics. {\blue Note that one can also directly use heuristics for GTSP such as GLKH \cite{helsgaun2015solving}, and GLNS \cite{smith2017glns}.} Since the target trajectories are continuous functions of time, the best arrival times for each target can be calculated, given the order in which they were visited in the GTSP solution, resulting in a feasible tour with improved travel time. Note that if the number of discrete time-instants are not sufficient, we may not obtain a feasible solution for the MT-TSP using this approach, even if one exists.

\subsection{Evaluating the Bounds} \label{linear}
{\blue In this section, we compare, for all the \abbrLinInst and \abbrPWLinInst instances, the feasible solution costs and the lower-bounding costs from the \abbrCstar variants. For \abbrLinInst instances, the optimum from the baseline SOCP is also added to the comparison.}



\vspace{1mm}
{\blue The results are presented in Fig.~\ref{costs}. Here, (a), (b), (c) includes all the \abbrLinInst instances, and (d), (e), (f) includes all the \abbrPWLinInst instances. The instances are sorted from left to right in the order of increasing feasible solution costs. We observe \abbrCstarLinear provides the tightest bounds, followed by \abbrCstarGeometric, then \abbrCstarSampling, and finally \abbrCstardash. The bounds from \abbrCstarLinear are the strongest since it uses the optimum for SFT. However, these bounds are closely matched by those from \abbrCstarGeometric and \abbrCstarSampling. Note that although \abbrCstarGeometric relaxes the timing requirements for SFT, it returns slightly stronger bounds than \abbrCstarSampling. The bounds from \abbrCstarSampling however improves, and converges to those from \abbrCstarLinear as the sampling parameter \abbrSamplingPar approaches $\infty$ and the gap tolerance \abbrSamplingGapTol approaches 0. This however, comes at the expense of increased computational burden. Finally, \abbrCstardash uses trivial lower-bounds for SFT, making its bounds the weakest.
For \abbrLinInst instances, feasible solution costs are tightly bounded by the SOCP costs, with the lower-bounds not exceeding the SOCP costs. This shows that the approach to find feasible solutions is effective, and that the \abbrCstar variants indeed provide lower-bounds for the MT-TSP.

\vspace{1mm}
For instance-1 in (c), the SOCP cost slightly exceeds the feasible cost. This is because the problem becomes significantly more computationally expensive at 15 targets, and as a result, the CPLEX solver failed to converge to the optimum within the time limit for that instance. Hence, the best feasible cost found by the solver before exceeding the time limit was used for this instance. Similarly, in (c), the bounds from all \abbrCstar variants except \abbrCstardash are weaker for instance-1, and the bounds from all \abbrCstar variants are weaker for instance-5. These too, were due to the increased computational complexity when considering 15 targets. Here, the CPLEX solver terminated due to insufficient memory, leaving the gap between the dual bound and the best objective value, not fully converged to be within the specified tolerance. In such cases, the best lower-bound found by the solver before termination was considered, as it still provides an underestimate. These \emph{outlier} instances are illustrated using square markers, indicating that CPLEX solver terminated due to memory constraints.}

\begin{figure*}
    \centering
    \includegraphics[width=1\textwidth]{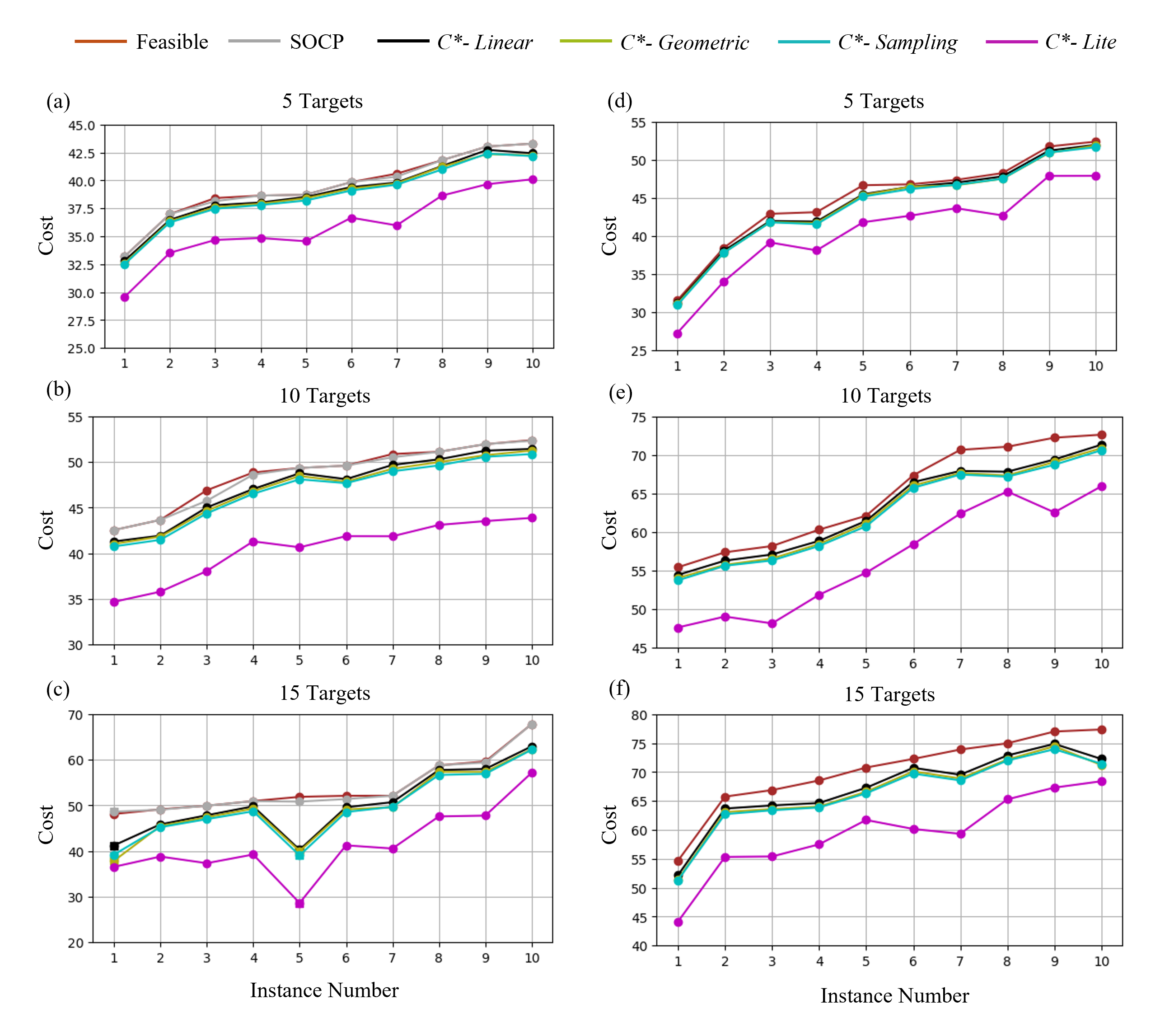} 
    \caption{{\blue Lower-bounds from the \abbrCstar variants as compared to the feasible costs, for the \abbrLinInst instances (left), and the \abbrPWLinInst instances (right). The SOCP costs are also included for the \abbrLinInst instances. In (c), square markers indicate the outlier cases for instances 1, and 5.}}
    \label{costs}
\end{figure*}






\subsection{Varying the Number of Targets}
{\blue In this section, we see how varying the number of targets affects the tightness of bounds, as well as runtimes, for the \abbrCstar variants. The SOCP is also evaluated for \abbrLinInst instances. For a given instance, the \abbrperDev is defined as $\frac{C_f-C_{lb}}{C_f}\times 100$ where $C_f$ denotes the feasible solution cost and $C_{lb}$ denotes the lower-bound. The lower-bound here represents the cost returned from a \abbrCstar variant, or the SOCP. The runtime (\abbrRT) for an instance is separated into the graph generation runtime (\abbrGRT), which is the runtime for constructing graphs in the \abbrCstar variants, and the total runtime (\abbrtotalRT), which is the time taken by the \abbrCstar variants for graph generation, and then solving GTSP on the generated graph. The runtime for SOCP is also referred to as \abbrtotalRT. In Fig.~\ref{devvstar}, (a) considers all the \abbrLinInst instances and (b) considers all the \abbrPWLinInst instances. The average of the \abbrperDev from all the instances (except outliers) corresponding to 5, 10, and 15 targets are presented in the figure. Fig.~\ref{rtvstar} follows the same layout as Fig.~\ref{devvstar}, and presents runtimes instead of \abbrperDev}.

\begin{figure*}
    \centering
    \includegraphics[width=\textwidth]{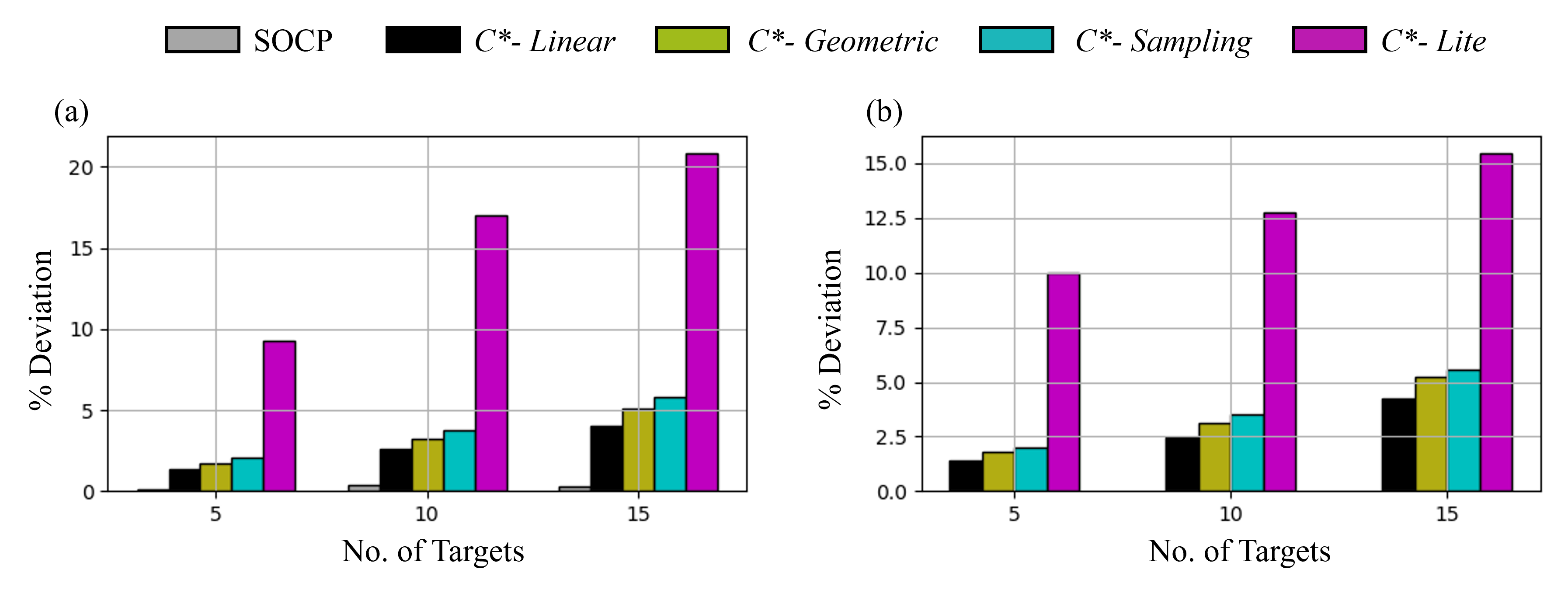}  
    \caption{{\blue Plots comparing the average \abbrperDev for the \abbrCstar variants (and SOCP for \abbrLinInst instances), as the number of targets are varied. Note how the \abbrperDev for \abbrCstarLinear is $\approx 4 \%$ for 15 targets, for both the \abbrLinInst (a) and \abbrPWLinInst (b) instances. Also, note how the average \abbrperDev for SOCP is less than $1 \%$.}}
    \label{devvstar}
\end{figure*}

\begin{figure*}
    \centering
    \includegraphics[width=\textwidth]{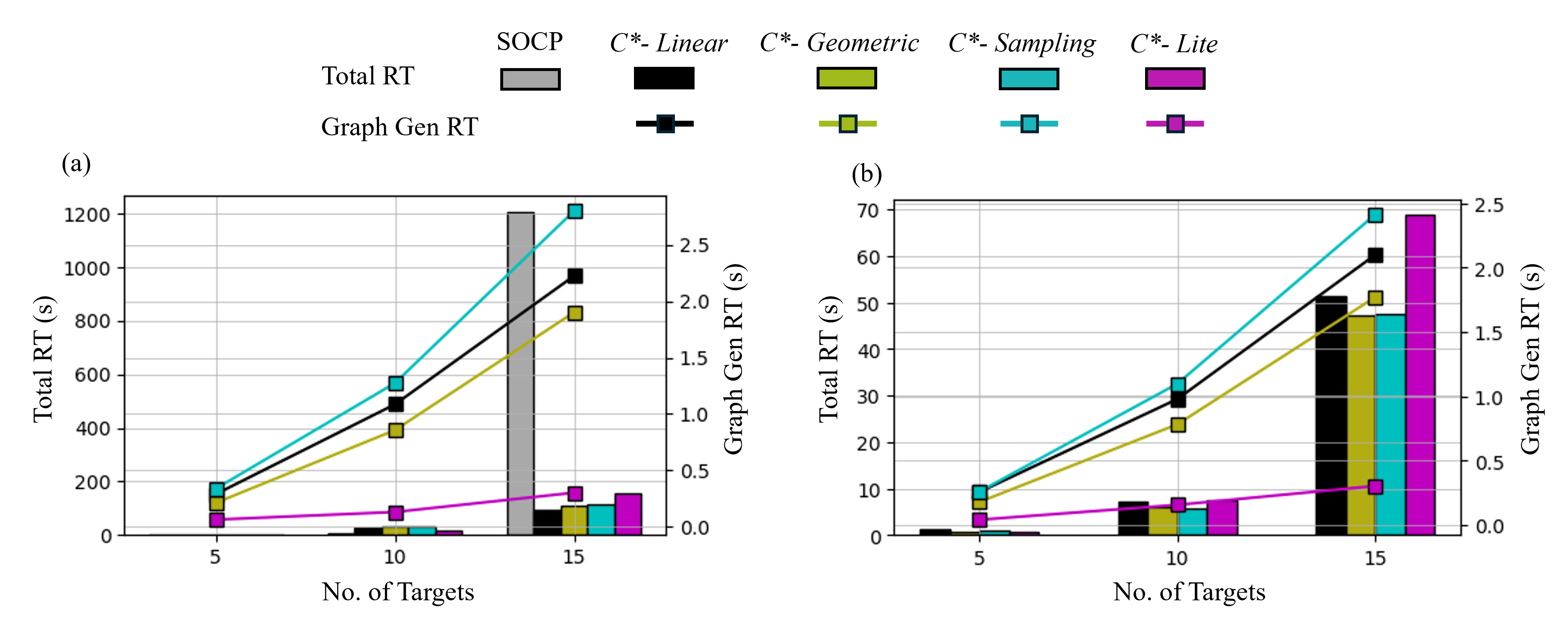} 
    \caption{ { \blue Plots comparing the average runtimes for the \abbrCstar variants, as the number of targets are varied, for both the \abbrLinInst (a), and \abbrPWLinInst (b) instances. Note how for the \abbrLinInst instances, SOCP is faster than the \abbrCstar variants for up to 10 targets, but an order of magnitude slower for 15 targets. }}
    \label{rtvstar}
\end{figure*}

\vspace{1mm}
{\blue From both Fig.~\ref{devvstar} (a) and (b), we observe that \abbrperDev for the \abbrCstar variants increases as the number of targets are increased. This is to be expected since each visited target in the relaxed MT-TSP incurs an additional discontinuity. As we saw previously, \abbrCstarLinear gives the best bounds, with an average \abbrperDev $\approx 4 \%$ for 15 targets. This is followed by \abbrCstarGeometric, \abbrCstarSampling, and \abbrCstardash, respectively. We also observe that the growth in \abbrperDev with the number of targets is greater in \abbrCstardash as compared to the other \abbrCstar variants.
Note how the \abbrperDev for \abbrCstarLinear, \abbrCstarGeometric, and \abbrCstarSampling for 15 targets, are still smaller than that of \abbrCstardash for 5 targets.
Finally, we observe the \abbrperDev for SOCP to be very small, indicating that the feasible solution costs obtained were on average, very close to the optimal costs. Note that the SOCP costs does not depend on the number of targets as it aims to find the optimum.}



\vspace{1mm}
{\blue From Fig.~\ref{rtvstar} (a) and (b) we see that \abbrGRT is the smallest for \abbrCstardash, followed by \abbrCstarGeometric, then \abbrCstarLinear, and finally, \abbrCstarSampling. 
This is because the lower-bounding algorithm for SFT in \abbrCstardash is trivial, making it the fastest. This is followed by the one in \abbrCstarGeometric, which solves an easier time-independent problem. \abbrCstarLinear finds the optimum for the SFT making it slightly slower. However, it uses \abbrsftAlg, which is specifically tailored for piecewise-linear target trajectories, and involves an efficient search. Finally, for \abbrCstarSampling, the computational burden for finding lower-bounds for SFT increases with larger sampling parameter \abbrSamplingPar, and smaller gap tolerance, \abbrSamplingGapTol. By setting \abbrSamplingPar to $10$ and \abbrSamplingGapTol to 0.05, \abbrCstarSampling incurred more computational burden than the other \abbrCstar variants. 
We also observe that the rate at which \abbrGRT grows for the different \abbrCstar variants can be explained similarly. 
Note however, that \abbrGRT in general contributes very little to the \abbrtotalRT as compared to the time taken to solve the GTSP. We see a big jump in \abbrtotalRT with increasing targets, especially between 10 targets and 15 targets. This can be attributed to the increasing complexity of solving GTSP on larger graphs. 
We observe that on average, the \abbrtotalRT is the least for \abbrCstarGeometric and \abbrCstarSampling, followed by \abbrCstarLinear, and then \abbrCstardash. It is likely that the relatively slower runtimes for \abbrCstardash is due to its weaker relaxation for MT-TSP, making the underlying GTSP more difficult to solve.
Finally, we observe from Fig.~\ref{rtvstar} (a), how the \abbrtotalRT for SOCP is significantly smaller than the \abbrCstar variants for 5, and 10 targets, but becomes an order of magnitude larger for 15 targets.}

\subsection{Varying the Discretization}
{\blue In this section, we show how varying the discretization levels for \abbrCstar variants affect their \abbrperDev and runtimes. Recall that for any target, the duration of each time-window is an integer multiple of $5 \, secs$, and the total duration from all the time-windows sums to $20 \, secs$. Considering this, we define the discretization levels as follows. At \texttt{lvl-1}, the time-windows for each target are partitioned into equal intervals of duration of $5 \, secs$. For every new level thereafter, the interval durations are halved, and the number of intervals are doubled. All of this is illustrated in Table.~\ref{tablediscrt}.

\begin{table}
    \centering
    \begin{tabular}{|c|c|c|c|c|}
        \hline
         Discretization Level & lvl-1 & lvl-2 & lvl-3 & lvl-4 \\
        \hline 
         Intervals per Target & 4 & 8 & 16 & 32 \\
        \hline
         Interval Duration & 5 & 2.5 & 1.25 & 0.625 \\
        \hline 
    \end{tabular}
    \vspace{2mm}
    \caption{Information about the discretization levels.}
    \label{tablediscrt}
\end{table}
}




\vspace{1mm}
{\blue Fig.~\ref{devrtvsdiscrt} (a) and (b) considers all the \abbrLinInst and \abbrPWLinInst instances, except for the outlier instances previously discussed. From all the instances considered, (a) illustrates the average \abbrperDev, and (b) illustrates the average runtimes, for each discretization level. From (a), we observe how higher discretization produces tighter bounds for all the \abbrCstar variants as one would expect. Note that the \abbrperDev improvement for \abbrCstardash is more significant here, as compared to the rest of the variants. Also, note that the \abbrperDev for \abbrCstardash at \texttt{lvl-4} is still higher than it is for \abbrCstarLinear at \texttt{lvl-1}. Finally, note that the rate at which \abbrperDev improves, decreases at higher discretizations, for all the approaches. From (b), we observe how the runtimes increase with higher discretization. 
Note that the increase in \abbrGRT is more significant for \abbrCstarLinear, \abbrCstarGeometric, and \abbrCstarSampling, than it is for \abbrCstardash here.
Finally, we observe how the growth in \abbrtotalRT is significantly higher between \texttt{lvl-3}, and \texttt{lvl-4}, than it is between the previous levels. Like before, this too can be attributed to the increasing complexity of solving GTSP on larger graphs. Sample solutions for a {\it complex} instance are shown in Fig. \ref{fig:linear}.}

\begin{figure*}[t]
    \centering
    \includegraphics[width=\textwidth]{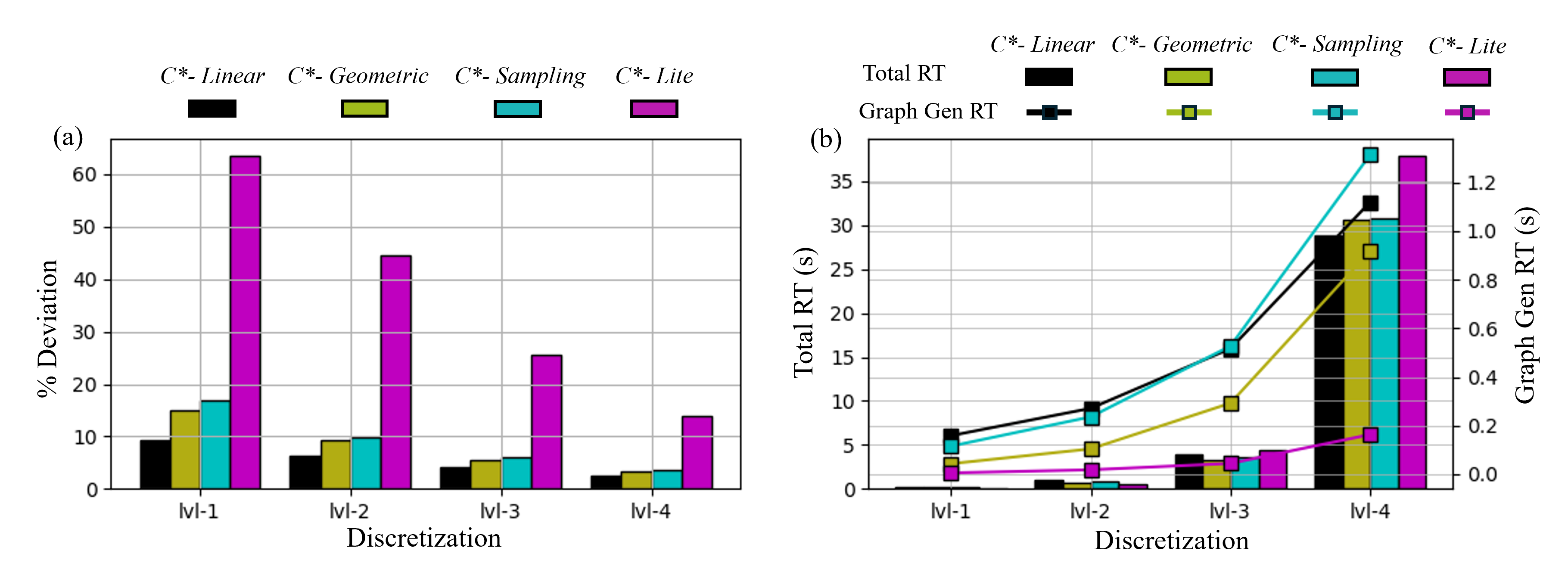}  
    \caption{{ \blue Plots illustrating (a) the average \abbrperDev for the \abbrCstar variants, and (b) the average runtimes for the \abbrCstar variants, for different levels of discretization. Note how in (a), \abbrCstarLinear at \texttt{lvl-1} has lower \abbrperdev than \abbrCstardash at \texttt{lvl-4}. Also in (b), note how at \texttt{lvl-4}, the runtimes increase significantly for all approaches.}}
    \label{devrtvsdiscrt}
\end{figure*}



\subsection{Obtaining Feasible Solutions from \abbrCstar Variants}

{\blue In this section, we attempt to construct feasible solutions for the MT-TSP, from lower-bounds obtained from the \abbrCstar variants. We also evaluate how good the costs are for these new solutions. Clearly, if the discretization parameter $\Delta$ goes to $0$, the lower-bounds converge to the optimum. However, this is computationally infeasible and therefore, we will fix the discretization level at \texttt{lvl-4}.}


{\blue
\vspace{1mm}
To construct feasible solutions from lower-bounds, we first fix the order in which the targets are visited in the lower-bounding solution, and then find a minimum cost tour for the agent over that fixed order such that it a) visits each target within one of its time-windows and b) completes the tour without exceeding its maximum speed $v_{max}$. For the cases where such a tour cannot be constructed, we say a feasible solution cannot be constructed from a given lower-bound. Note that this is the same procedure we used in section~\ref{Sec:feasible solution} to obtain feasible tours with improved travel times from the ones initially found by solving the GTSP. 
}


{\blue
\vspace{1mm}
For Table.~\ref{tablesuccess}, we consider all the \abbrLinInst and \abbrPWLinInst instances, except for the outlier instances (same instances as in Fig.~\ref{devrtvsdiscrt}). Here, {\it Success Rate} illustrates the percentage of instances from which a feasible solution can be constructed from the lower-bound. For such instances, we compare the new feasible solutions with the original ones to determine if the order in which the targets are visited remains the same (or matches). Note that if the orders match, then the arrival times for the targets must also be the same, since we used the same procedure to reoptimize the original feasible solutions, as well as construct new feasible solutions from lower-bounds, as discussed earlier. The percentage of instances where the feasible solutions match is given by $\%$ Match in the table. Finally, from all the remaining instances where the solutions do not match, we average the costs of both the feasible solutions originally obtained as well as the newly constructed ones, denoted by $C_f^{old}$ and ${C_f^{new}}$ respectively, and find $\%$ Dev-Mismatch defined as $\frac{C_f^{new}-C_f^{old}}{C_f^{old}}\times100$.
}


{\blue
\vspace{1mm}
We observe that the success rate is more than 95$\%$ for all the \abbrCstar variants, with \abbrCstardash giving the best rate of 98$\%$. 
If a feasible solution cannot be constructed from one lower-bound, it might still be possible to construct one from another. Hence, for at least 98$\%$ of the instances, we were able to construct feasible solutions from various \abbrCstar lower-bounds. We also observe that the $\%$ match is similar for \abbrCstarLinear, \abbrCstarGeometric, and \abbrCstarSampling, and it is significantly higher than for \abbrCstardash. Finally, we observe that the $\%$ dev-mismatch remains close to 0$\%$ for all the \abbrCstar variants. Specifically, ${C_f^{new}}$ is marginally worse than ${C_f^{old}}$ for \abbrCstarLinear and \abbrCstardash, while ${C_f^{new}}$ is marginally better than ${C_f^{old}}$ for \abbrCstarGeometric and \abbrCstarSampling.
}


\begin{table}[h]
    \centering
    \begin{tabular}{|c|c|c|c|c|}
    \hline
         & Success Rate ($\%$) & $\%$ Match & $\%$ Dev-Mismatch  \\
    \hline
         \abbrCstarLinear & 96.55 & 64.29 & 0.21 \\
    \hline
         \abbrCstarGeometric & 96.55 & 60.34 & -0.026 \\
    \hline
         \abbrCstarSampling & 96.55 & 62.07 & -0.072 \\
    \hline
         \abbrCstardash & 98.28 & 47.37 & 0.27 \\
    \hline
    \end{tabular}
    \vspace{2mm}
    \caption{Table illustrating the success rates for constructing feasible solutions from various \abbrCstar lower-bounds, and how these new feasible solutions compare with the originally found feasible solutions.}
    \label{tablesuccess}
\end{table}

{\blue
\subsection{Evaluating \abbrCstar Variants for Generic Instances} \label{section:generic}


\begin{figure*}
    \centering
    \includegraphics[width=\textwidth]{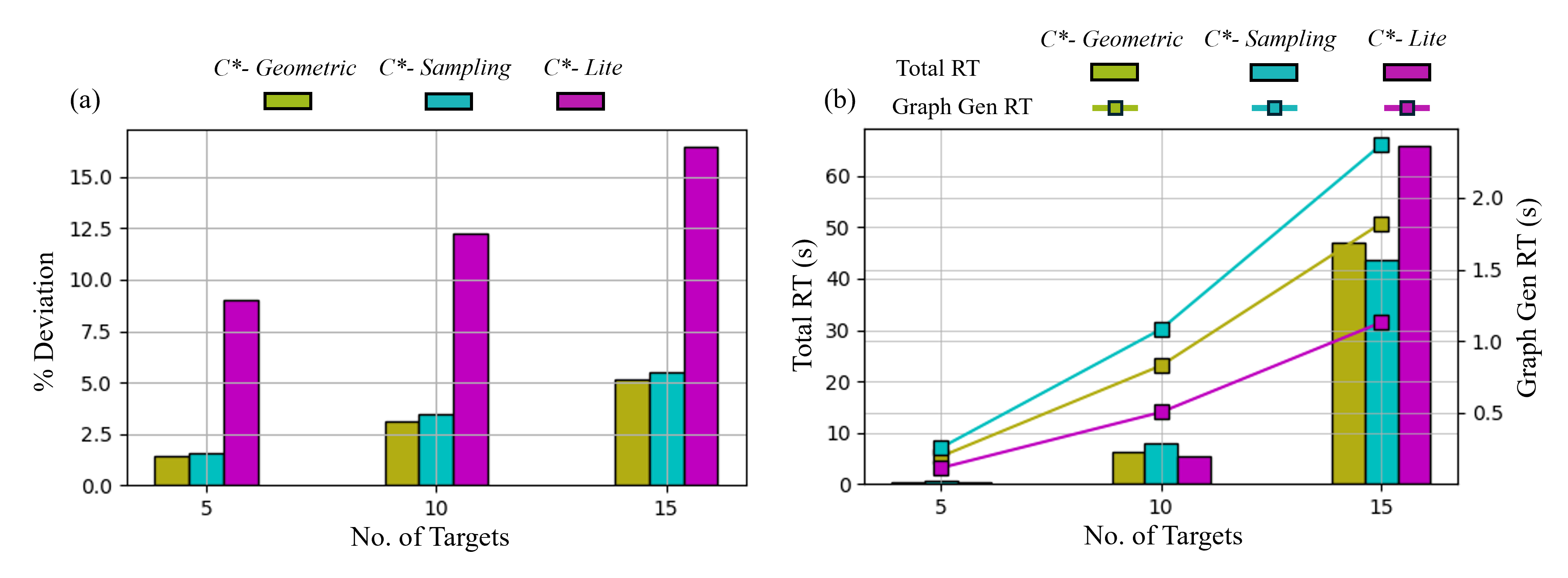}
    \caption{{\blue Plots illustrating (a) the average \abbrperDev, and (b) the average runtimes, for the \abbrCstar variants, as the number of targets are varied. Only the \abbrDubinsInst instances are considered for both the plots.}}
    \label{pltDubinsResults}
\end{figure*}

In this section, we consider only the 30 generic instances, where targets move along Dubins curves. We evaluate \abbrCstarGeometric, \abbrCstarSampling, and \abbrCstardash for these instances, and omit \abbrCstarLinear as it specifically caters to targets moving along piecewise-linear paths. Fig.~\ref{pltDubinsResults} illustrates in (a), the average \abbrperDev, and in (b), the average runtimes, for 5, 10, and 15 targets. Like for \abbrLinInst and \abbrPWLinInst instances in Fig.~\ref{devvstar}, the \abbrperDev here grows with more targets, with \abbrCstarGeometric always giving the best bounds, followed by \abbrCstarSampling, and then \abbrCstardash. Here too, the \abbrperDev for both \abbrCstarGeometric and \abbrCstarSampling at 15 targets are still smaller than it is for \abbrCstardash at 5 targets. Note that the \abbrperDev for \abbrCstarGeometric at 15 targets is $\approx 5 \%$. Averaging this with the \abbrperDev of $\approx 4 \%$ from \abbrPWLinInst instances, our approaches give on average, a \abbrperDev of $\approx 4.5 \%$ for general cases of MT-TSP. The runtime results are also similar to the ones presented for \abbrLinInst and \abbrPWLinInst instances in Fig.~\ref{rtvstar}. Both \abbrGRT and \abbrtotalRT increases with more number of targets. The \abbrGRT as well as its growth, are more significant for \abbrCstarGeometric and \abbrCstarSampling  than it is for \abbrCstardash. Finally, the \abbrtotalRT grows significantly for all the \abbrCstar variants considered, as the number of targets are increased from 10 to 15. Sample solutions for the general case are shown in Fig. \ref{fig:dubins}.
}


\begin{figure*}[htb!]
\centering
  \begin{subfigure}[t]{.32\linewidth}
    \centering\includegraphics[width=1\linewidth]{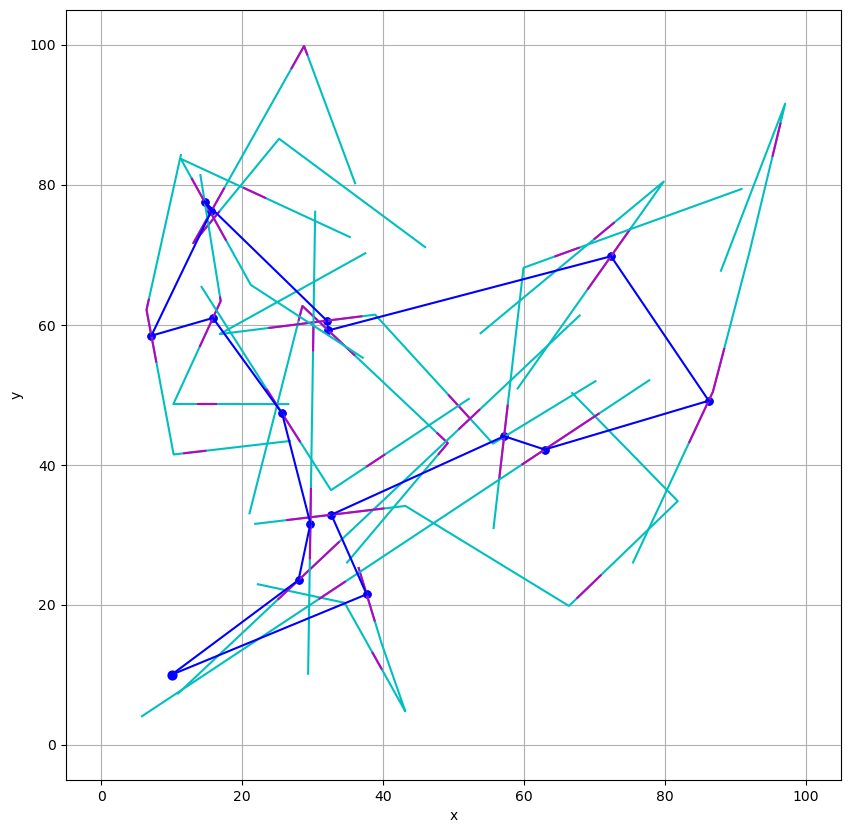}
    \caption{Feasible solution}
    \label{fig:linearFeasible}
  \end{subfigure} 
  \begin{subfigure}[t]{.32\linewidth}
    \centering\includegraphics[width=1\linewidth]{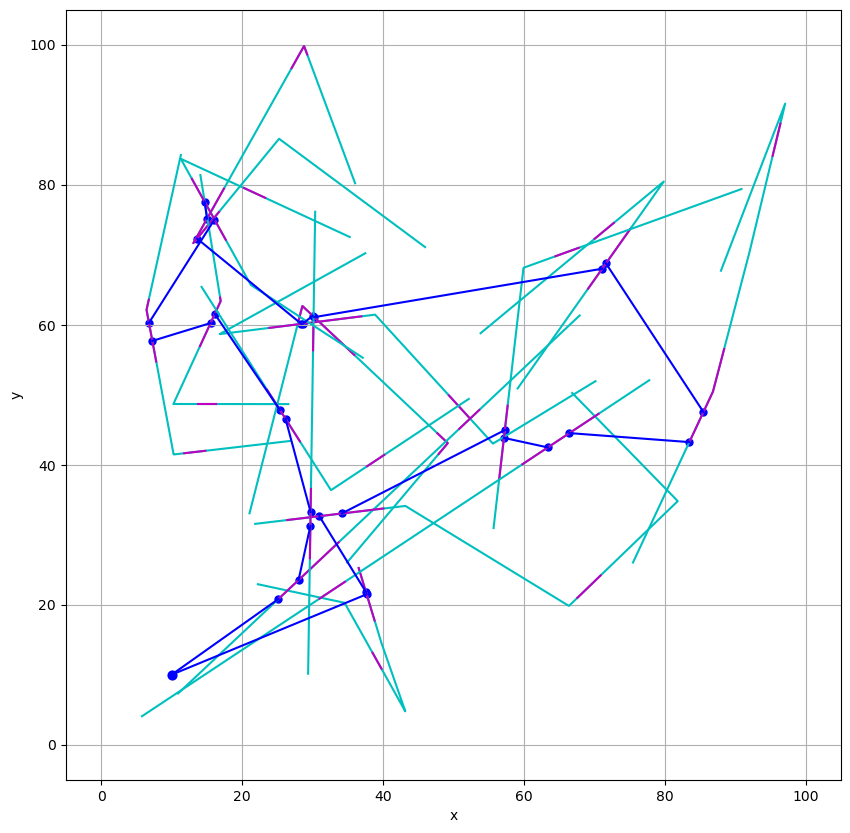}
    \caption{\abbrCstarLinear}
     \label{fig:linearLinear}
  \end{subfigure} 
  \begin{subfigure}[t]{.32\linewidth}
    \centering\includegraphics[width=1\linewidth]{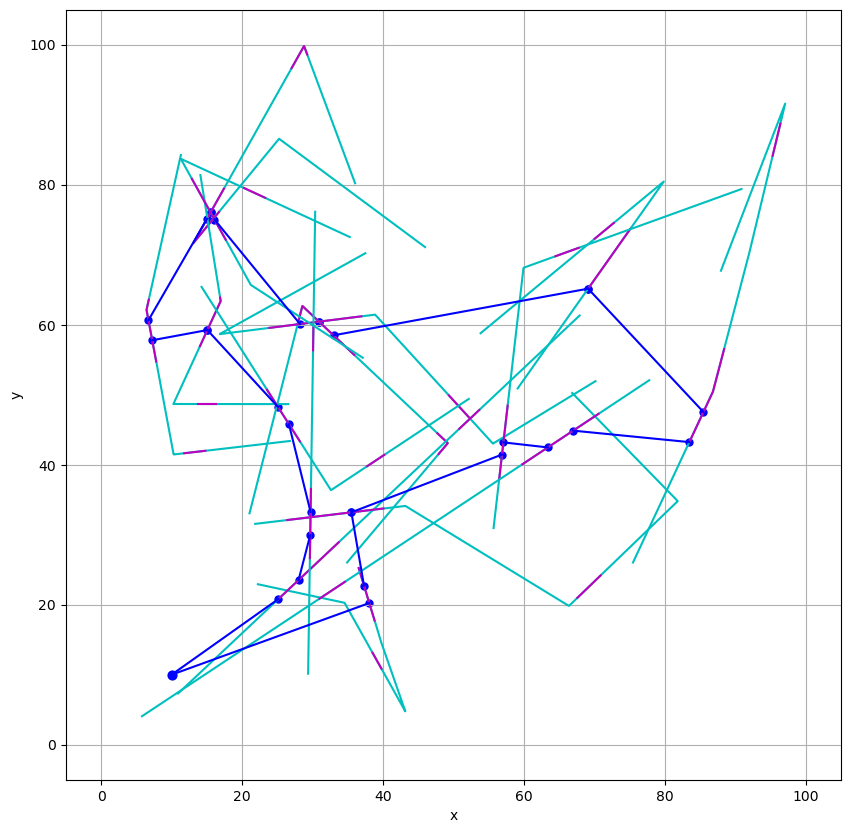}
    \caption{\abbrCstarGeometric}
     \label{fig:linearGeometric}
  \end{subfigure}
  \caption{A feasible solution (using the algorithm in section \ref{Sec:feasible solution}) and the two best bounding solutions for a complex instance with 15 targets moving along piecewise-linear paths.}
  \label{fig:linear}
\end{figure*}

\begin{figure*}[htb!]
\centering
  \begin{subfigure}[t]{.32\linewidth}
    \centering\includegraphics[width=1\linewidth]{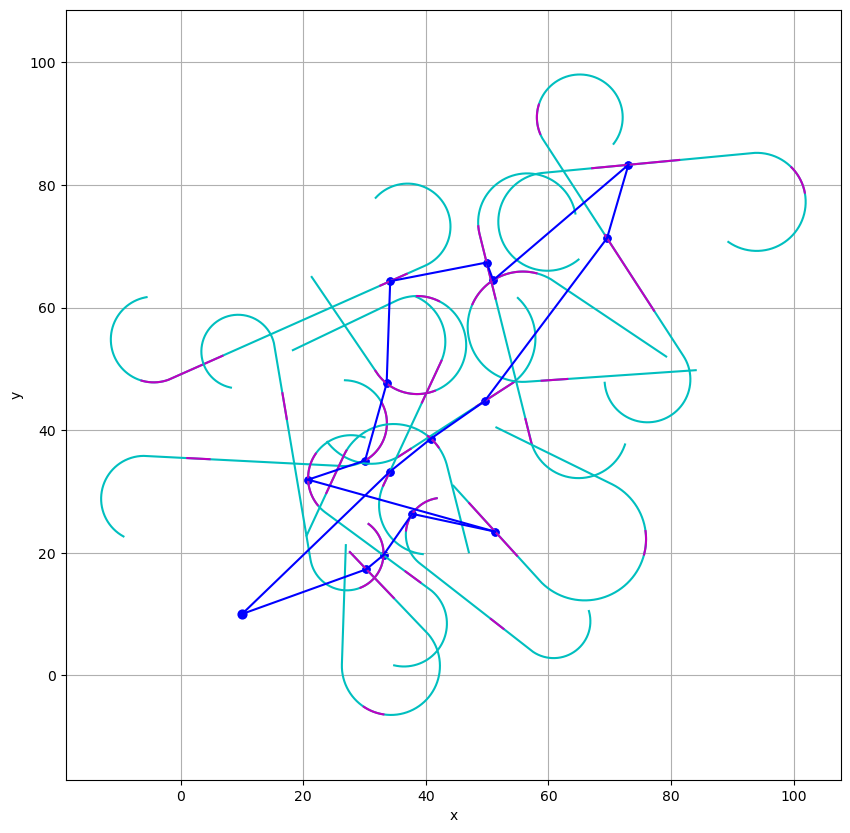}
    \caption{Feasible solution}
    \label{fig:dubinsFeasible}
  \end{subfigure} 
  \begin{subfigure}[t]{.32\linewidth}
    \centering\includegraphics[width=1\linewidth]{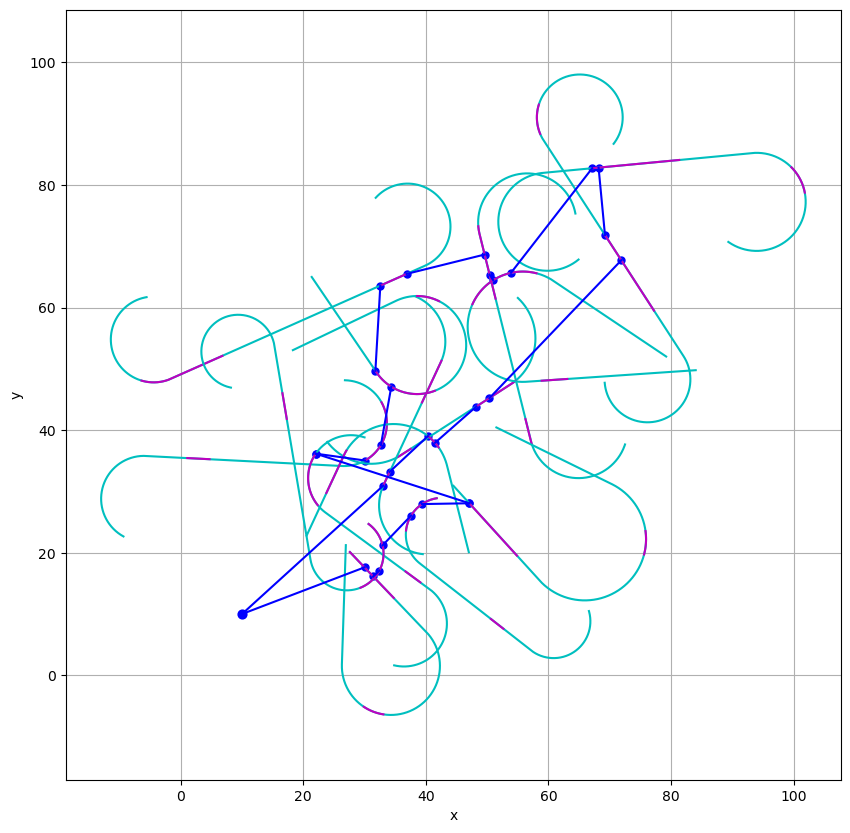}
    \caption{\abbrCstarGeometric}
     \label{fig:dubinsGeometric}
  \end{subfigure} 
  \begin{subfigure}[t]{.32\linewidth}
    \centering\includegraphics[width=1\linewidth]{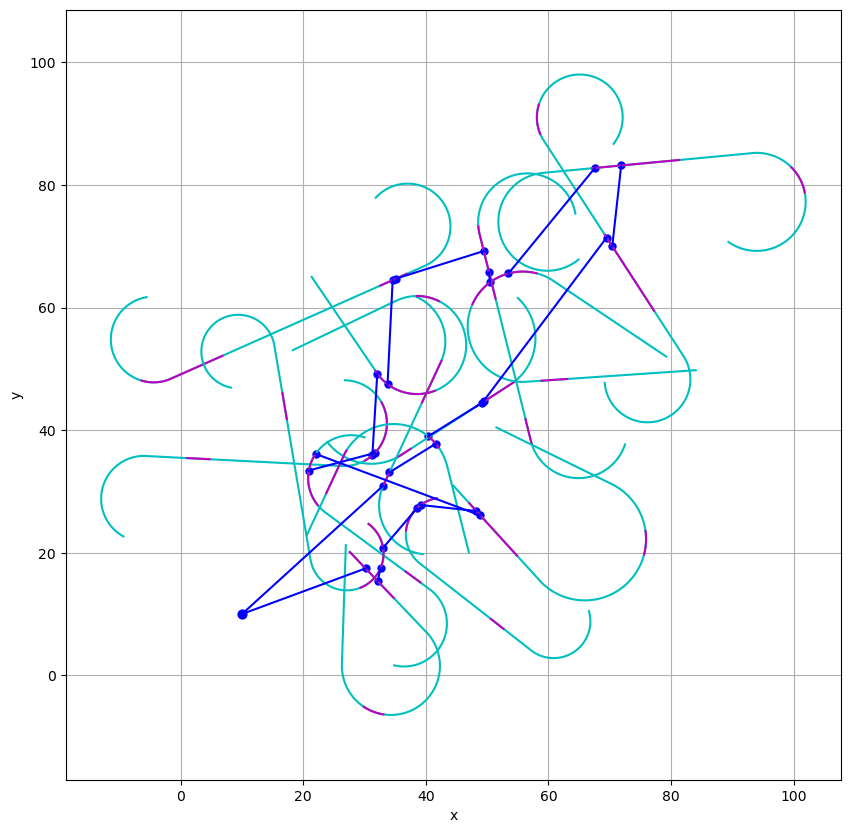}
    \caption{\abbrCstarSampling}
     \label{fig:dubinsSampling}
  \end{subfigure}
  \caption{A feasible solution (using the algorithm in section \ref{Sec:feasible solution}) and the two best bounding solutions for a general instance with 15 targets moving along Dubins curves.}
  \label{fig:dubins}
\end{figure*}

\section{Conclusion and Future Work}\label{conclude}
{\blue We presented \abbrCstarVars, an approach for finding lower-bounds for the MT-TSP with time-window constraints. Our method can handle generic target trajectories, including piecewise-linear segments and Dubins curves. We introduced several variants of \abbrCstarVars, providing a trade-off between the quality of guarantees and algorithm's running speed. Additionally, we proved that our approaches yield valid lower-bounds for the MT-TSP and presented extensive numerical results to demonstrate the performance of all the variants of \abbrCstarVars.} Finally, we showed that feasible solutions can often be constructed from the lower-bounding solutions obtained using \abbrCstarVars variants.

One of the challenges in this paper was the computational burden associated with increasing the number of targets. This difficulty arises from solving the GTSP, where the computational complexity heavily depends on the number of nodes in the generated graph. Finding a way to overcome this challenge would enable us to compute tight bounds for a larger number of targets. {\blue In fact, one of our primary future goals is to develop efficient branch-and-cut implementations that leverage the specific features of the MT-TSP problem. For instance, unlike standard GTSP formulations that include subtour elimination constraints, we may be able to omit some of these constraints because the timing constraints in our problem inherently eliminate certain subtours (e.g., those that travel back in time). This represents a promising new research direction that warrants further investigation. Another direction of research can focus on developing approximation algorithms for simpler variants of the MT-TSP. }

\bibliographystyle{plain}

\begin{thebibliography}{10}

\bibitem{bourjolly2006orbit}
Jean-Marie Bourjolly, Ozgur Gurtuna, and Aleksander Lyngvi.
\newblock On-orbit servicing: a time-dependent, moving-target traveling salesman problem.
\newblock {\em International Transactions in Operational Research}, 13(5):461--481, 2006.

\bibitem{brumitt1996dynamic}
Barry~L Brumitt and Anthony Stentz.
\newblock Dynamic mission planning for multiple mobile robots.
\newblock In {\em Proceedings of IEEE International Conference on Robotics and Automation}, volume~3, pages 2396--2401. IEEE, 1996.

\bibitem{chalasani1999approximating}
Prasad Chalasani and Rajeev Motwani.
\newblock Approximating capacitated routing and delivery problems.
\newblock {\em SIAM Journal on Computing}, 28(6):2133--2149, 1999.

\bibitem{cheikhrouhou2020cloud}
Omar Cheikhrouhou, Anis Koub{\^a}a, and Anis Zarrad.
\newblock A cloud based disaster management system.
\newblock {\em Journal of Sensor and Actuator Networks}, 9(1):6, 2020.

\bibitem{choubey2013}
Nitin~S Choubey.
\newblock Moving target travelling salesman problem using genetic algorithm.
\newblock {\em International Journal of Computer Applications}, 70(2), 2013.

\bibitem{conesa2016mix}
Jesus Conesa-Mu{\~n}oz, Gonzalo Pajares, and Angela Ribeiro.
\newblock Mix-opt: A new route operator for optimal coverage path planning for a fleet in an agricultural environment.
\newblock {\em Expert Systems with Applications}, 54:364--378, 2016.

\bibitem{deMoraes2019}
Rodrigo~S de~Moraes and Edison~P de~Freitas.
\newblock Experimental analysis of heuristic solutions for the moving target traveling salesman problem applied to a moving targets monitoring system.
\newblock {\em Expert Systems with Applications}, 136:392--409, 2019.

\bibitem{Dub}
L.~E. Dubins.
\newblock On curves of minimal length with a constraint on average curvature, and with prescribed initial and terminal positions and tangents.
\newblock {\em American Journal of Mathematics}, 79(3):497--516, 1957.

\bibitem{englot2013efficient}
Brendan Englot, Tuhin Sahai, and Isaac Cohen.
\newblock Efficient tracking and pursuit of moving targets by heuristic solution of the traveling salesman problem.
\newblock In {\em 52nd ieee conference on decision and control}, pages 3433--3438. IEEE, 2013.

\bibitem{granado2024fishing}
Igor Granado, Leticia Hernando, Zigor Uriondo, and Jose~A Fernandes-Salvador.
\newblock A fishing route optimization decision support system: The case of the tuna purse seiner.
\newblock {\em European Journal of Operational Research}, 312(2):718--732, 2024.

\bibitem{groba2015solving}
Carlos Groba, Antonio Sartal, and Xos{\'e}~H V{\'a}zquez.
\newblock Solving the dynamic traveling salesman problem using a genetic algorithm with trajectory prediction: An application to fish aggregating devices.
\newblock {\em Computers \& Operations Research}, 56:22--32, 2015.

\bibitem{ham2018integrated}
Andy~M Ham.
\newblock Integrated scheduling of m-truck, m-drone, and m-depot constrained by time-window, drop-pickup, and m-visit using constraint programming.
\newblock {\em Transportation Research Part C: Emerging Technologies}, 91:1--14, 2018.

\bibitem{hammar1999}
Mikael Hammar and Bengt~J Nilsson.
\newblock Approximation results for kinetic variants of tsp.
\newblock In {\em Automata, Languages and Programming: 26th International Colloquium, ICALP’99 Prague, Czech Republic, July 11--15, 1999 Proceedings 26}, pages 392--401. Springer, 1999.

\bibitem{hassoun2020}
Michael Hassoun, Shraga Shoval, Eran Simchon, and Liron Yedidsion.
\newblock The single line moving target traveling salesman problem with release times.
\newblock {\em Annals of Operations Research}, 289:449--458, 2020.

\bibitem{helsgaun2000effective}
Keld Helsgaun.
\newblock An effective implementation of the lin--kernighan traveling salesman heuristic.
\newblock {\em European journal of operational research}, 126(1):106--130, 2000.

\bibitem{helsgaun2015solving}
Keld Helsgaun.
\newblock Solving the equality generalized traveling salesman problem using the lin--kernighan--helsgaun algorithm.
\newblock {\em Mathematical Programming Computation}, 7:269--287, 2015.

\bibitem{helvig2003}
Christopher~S Helvig, Gabriel Robins, and Alex Zelikovsky.
\newblock The moving-target traveling salesman problem.
\newblock {\em Journal of Algorithms}, 49(1):153--174, 2003.

\bibitem{jiang2005tracking}
Q~Jiang, R~Sarker, and H~Abbass.
\newblock Tracking moving targets and the non-stationary traveling salesman problem.
\newblock {\em Complexity International}, 11(2005):171--179, 2005.

\bibitem{laporte1987generalized}
Gilbert Laporte, H{\'e}l{\`e}ne Mercure, and Yves Nobert.
\newblock Generalized travelling salesman problem through n sets of nodes: the asymmetrical case.
\newblock {\em Discrete Applied Mathematics}, 18(2):185--197, 1987.

\bibitem{liu2018efficient}
Yuanchang Liu and Richard Bucknall.
\newblock Efficient multi-task allocation and path planning for unmanned surface vehicle in support of ocean operations.
\newblock {\em Neurocomputing}, 275:1550--1566, 2018.

\bibitem{marlow2007travelling}
DO~Marlow, P~Kilby, and GN~Mercer.
\newblock The travelling salesman problem in maritime surveillance--techniques, algorithms and analysis.
\newblock In {\em Proceedings of the international congress on modelling and simulation}, pages 684--690, 2007.

\bibitem{maskooki2023bi}
Alaleh Maskooki and Markku Kallio.
\newblock A bi-criteria moving-target travelling salesman problem under uncertainty.
\newblock {\em European Journal of Operational Research}, 2023.

\bibitem{noon1993efficient}
Charles~E Noon and James~C Bean.
\newblock An efficient transformation of the generalized traveling salesman problem.
\newblock {\em INFOR: Information Systems and Operational Research}, 31(1):39--44, 1993.

\bibitem{oberlin2010today}
Paul Oberlin, Sivakumar Rathinam, and Swaroop Darbha.
\newblock Today's traveling salesman problem.
\newblock {\em IEEE robotics \& automation magazine}, 17(4):70--77, 2010.

\bibitem{philip2024mixed}
Allen~George Philip, Zhongqiang Ren, Sivakumar Rathinam, and Howie Choset.
\newblock A mixed-integer conic program for the moving-target traveling salesman problem based on a graph of convex sets.
\newblock {\em arXiv preprint arXiv:2403.04917}, 2024.

\bibitem{ryan1998reactive}
Joel~L Ryan, T~Glenn Bailey, James~T Moore, and William~B Carlton.
\newblock Reactive tabu search in unmanned aerial reconnaissance simulations.
\newblock In {\em 1998 Winter Simulation Conference. Proceedings (Cat. No. 98CH36274)}, volume~1, pages 873--879. IEEE, 1998.

\bibitem{saleh2004design}
Hussain~Aziz Saleh and Rachid Chelouah.
\newblock The design of the global navigation satellite system surveying networks using genetic algorithms.
\newblock {\em Engineering Applications of Artificial Intelligence}, 17(1):111--122, 2004.

\bibitem{savelsbergh1985local}
Martin~WP Savelsbergh.
\newblock Local search in routing problems with time windows.
\newblock {\em Annals of Operations research}, 4:285--305, 1985.

\bibitem{smith2021assessment}
Cody~Dakoi Smith.
\newblock {\em Assessment of genetic algorithm based assignment strategies for unmanned systems using the multiple traveling salesman problem with moving targets}.
\newblock University of Missouri-Kansas City, 2021.

\bibitem{smith2017glns}
Stephen~L Smith and Frank Imeson.
\newblock Glns: An effective large neighborhood search heuristic for the generalized traveling salesman problem.
\newblock {\em Computers \& Operations Research}, 87:1--19, 2017.

\bibitem{sriniketh2023robot}
Konduri Sriniketh, Anh~Vu Le, Rajesh~Elara Mohan, Bing~J Sheu, Vo~Dinh Tung, Phan Van~Duc, and Minh~Bui Vu.
\newblock Robot-aided human evacuation optimal path planning for fire drill in buildings.
\newblock {\em Journal of Building Engineering}, 72:106512, 2023.

\bibitem{stieber2022}
Anke Stieber and Armin F{\"u}genschuh.
\newblock Dealing with time in the multiple traveling salespersons problem with moving targets.
\newblock {\em Central European Journal of Operations Research}, 30(3):991--1017, 2022.

\bibitem{ucar2019meta}
Ukbe Ucar and Sel{\c{c}}uk~K{\"u}r{\c{s}}at I{\c{s}}leyen.
\newblock A meta-heuristic solution approach for the destruction of moving targets through air operations.
\newblock {\em International Journal of Industrial Engineering}, 26(6), 2019.

\bibitem{venkatachalam2018two}
Saravanan Venkatachalam, Kaarthik Sundar, and Sivakumar Rathinam.
\newblock A two-stage approach for routing multiple unmanned aerial vehicles with stochastic fuel consumption.
\newblock {\em Sensors}, 18(11):3756, 2018.

\bibitem{wang2023moving}
Yixuan Wang and Nuo Wang.
\newblock Moving-target travelling salesman problem for a helicopter patrolling suspicious boats in antipiracy escort operations.
\newblock {\em Expert Systems with Applications}, 213:118986, 2023.

\bibitem{yu2002implementation}
Zhong Yu, Liang Jinhai, Gu~Guochang, Zhang Rubo, and Yang Haiyan.
\newblock An implementation of evolutionary computation for path planning of cooperative mobile robots.
\newblock In {\em Proceedings of the 4th World Congress on Intelligent Control and Automation (Cat. No. 02EX527)}, volume~3, pages 1798--1802. IEEE, 2002.

\bibitem{zhao2015heuristic}
Wanqing Zhao, Qinggang Meng, and Paul~WH Chung.
\newblock A heuristic distributed task allocation method for multivehicle multitask problems and its application to search and rescue scenario.
\newblock {\em IEEE transactions on cybernetics}, 46(4):902--915, 2015.

\end{thebibliography}

\section{Appendix} \label{appendix}
\subsection{Finding Earliest Feasible Arrival Time} \label{sec:append_EFA}
\begin{figure}[h]
    \centering
    \includegraphics[width=\linewidth]{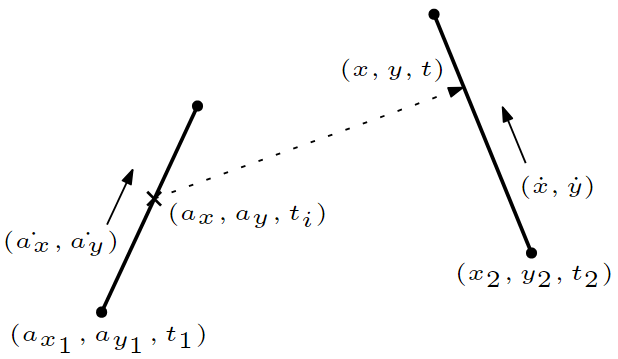}
    \caption{Trajectories $\pi_i$ and $\pi_j$ for targets $i$ and $j$.}
    \label{eg3}
\end{figure}

Let the trajectory-point $\pi_{s}(t)$ for some target $s$ at time $t$ be denoted by the tuple $(x, y)$, where $x$ and $y$ are the coordinates of the position occupied by $s$ at time $t$. Also, let $\dot{x}$ and $\dot{y}$ denote the time derivative of $x$ and $y$ respectively. Let $i$ and $j$ be two targets moving along trajectories $\pi_{i}$ and $\pi_{j}$ as shown in Fig~\ref{eg3}. Note that $\pi_{i}(t_i) \equiv (a_{x}, a_{y})$ and $\pi_{j}(t) \equiv (x,y)$. Let $\pi_{i}(t_1) \equiv (a_{x_1}, a_{y_1})$ and $\pi_{j}(t_2) \equiv (x_2, y_2)$. The following equations then describe the motion of $i$ and $j$ from times $t_1$ and $t_2$ onward, respectively.
\begin{align}
   &a_{x} = a_{x_1} + \dot{a_x}(t_i - t_1) \label{m-1} \\ 
   &a_{y} = a_{y_1} + \dot{a_y}(t_i - t_1) \label{m-2} \\
   &x = x_2 + \dot{x}(t - t_2) \label{m-3} \\
   &y = y_2 + \dot{y}(t - t_2) \label{m-4}
\end{align}


\vspace{1mm}
The distance between $(a_x, a_y)$ and $(x,y)$ is then given by $\sqrt{(x-a_x)^2+(y-a_y)^2}$. Also, we want the agent to travel at speed $v_{max}$ to obtain $e(t_i)$. Hence, we want $t$ that satisfies
\begin{align}
    &\sqrt{(x-a_x)^2+(y-a_y)^2} = v_{max}(t-t_i) \label{sqrtreq-1}
\end{align}

Squaring both sides, we obtain
\begin{align}
    &(x-a_x)^2+(y-a_y)^2 = v_{max}^2(t-t_i)^2 \label{req-1}
\end{align}

Substituting \eqref{m-1}, \eqref{m-2}, \eqref{m-3}, \eqref{m-4} into \eqref{req-1}, and rearranging the terms, we get
\begin{align}
    &(\dot{x}t+C_1)^2+(\dot{y}t+C_2)^2 = v_{max}^2(t-t_i)^2
\end{align}

where
\begin{align*}
    &C_1 = -\dot{a_x}t_i+C_1' \\
    &C_2 = -\dot{a_y}t_i+C_2' \\
    &C_1' = x_2-\dot{x}t_2-a_{x_1}+\dot{a_x}t_1 \\
    &C_2' = y_2-\dot{y}t_2-a_{y_1}+\dot{a_y}t_1
\end{align*}

After extensive algebra, we finally get the following.
\begin{align}
    &At^2+B(t_i)t+C(t_i) = 0 \label{eqEFA}
\end{align}

where
\begin{align*}
    &A = \dot{x}^2+\dot{y}^2-v_{max}^2 \\
    &B(t_i) = 2B't_i+2C' \\
    &C(t_i) = A't_i^2-D't_i+E' \\
    &B' = -\dot{a_x}\dot{x}-\dot{a_y}\dot{y}+v_{max}^2 \\
    &C' = C_1'\dot{x} + C_2'\dot{y} \\
    &A' = \dot{a_x}^2+\dot{a_y}^2-v_{max}^2 \\
    &D' = 2\dot{a_x}C_1'+2\dot{a_y}C_2' \\
    &E' = C_1'^2 + C_2'^2
\end{align*}

One of the two roots that satisfies \eqref{eqEFA} is then the \abbrEFAT $\mathcal{E}(t_i)$. These roots\footnote{We obtain two roots since \eqref{sqrtreq-1} is squared to get \eqref{req-1}. However, only one of the roots yield the \abbrEFAT. Similar reasoning can be used to explain the same occurrence when finding the \abbrLFDT.} can be obtained using the quadratic formula as shown below.
\begin{align} \label{eqEFAquad}
    t = \frac{-B(t_i) \pm \sqrt{B(t_i)^2 - 4AC(t_i)}}{2A}
\end{align}

\subsection{Finding Latest Feasible Departure Time} \label{sec:append_LFD}

From Theorem \ref{thm:fixedpoint}, we know that $t = \mathcal{E}(t_i) \iff \mathcal{L}(t) = t_i$. Hence, given a value of $t$, we seek to find $t_i$ such that $t$ satisfies \eqref{eqEFA}. To solve this, note that \eqref{eqEFA} can be expanded as follows.
\begin{align} \label{eqexpEFA}
    At^2+(2B't_i+2C')t+(A't_i^2-D't_i+E')=0
\end{align}

By rearranging the terms in \eqref{eqexpEFA} we get the below equation.
\begin{align} \label{eqexpLFD}
    A't_i^2+(2B't-D')t_i+(At^2+2C't+E')=0
\end{align}

\eqref{eqexpLFD} can then be represented simply as 
\begin{align} \label{eqLFD}
    A't_i^2+H(t)t_i+I(t)=0
\end{align}

Hence, one of the two roots that satisfies \ref{eqLFD} is the \abbrLFDT $\mathcal{L}(t)$. Like previously shown, these roots can be obtained using the quadratic formula below.
\begin{align}
    t_i = \frac{-H(t) \pm \sqrt{H(t)^2 - 4A'I(t)}}{2A'}
\end{align}

\subsection{Finding the Stationary Points for the SFT problem}\label{sec:append_SFT}
Consider the same setup as in \ref{sec:append_EFA}. In the SFT problem, our aim is to find a feasible travel from $t_i$ such that $\mathcal{E}(t_i)-t_i$ is minimized. While in the EFAT problem, $t_i$ is given, here $t_i$ is also a variable and the arrival time $t$ will be a function of $t_i$. This function has already been derived in Equation \eqref{eqEFAquad}. By differentiating $t$ in \eqref{eqEFAquad} with respect to $t_i$, we can find an expression for $\frac{dt}{dt_i}$ as follows.

\begin{align*}
    &\frac{dt}{dt_i}=\frac{1}{2A}\left[-\frac{d}{dt_i}B(t_i) \pm \frac{2B(t_i)\frac{d}{dt_i}B(t_i)-4A\frac{d}{dt_i}C(t_i)}{2\sqrt{B(t_i)^2-4AC(t_i)}}\right] = \\
    &\frac{1}{2A}\left[-2B' \pm \frac{2(2B't_i+2C')(2B')-4A(2A't_i-D')}{2\sqrt{(2B't_i+2C')^2-4A(A't_i^2-D't_i+E')}} \right]
\end{align*}

\vspace{1mm}
After further simplification, we get
\begin{align} \label{eqdtdt_i}
    \frac{dt}{dt_i}=\frac{1}{A}\left[-B' \pm \frac{2B'(B't_i+C')-A(2A't_i-D')}{2\sqrt{(B't_i+C')^2-A(A't_i^2-D't_i+E')}} \right]
\end{align}

\vspace{1mm}
To find the stationary points of $t-t_i$, we set $\frac{d}{dt_i}(t-t_i)=0$ which then gives us
\begin{align} \label{eqmain}
    &\frac{dt}{dt_i}-1=0\implies \frac{dt}{dt_i}=1
\end{align}

\vspace{1mm}
Substituting \eqref{eqdtdt_i} into \eqref{eqmain}, we get the following.
\begin{align*}
    &\frac{1}{A}\left[-B' \pm \frac{2B'(B't_i+C')-A(2A't_i-D')}{2\sqrt{(B't_i+C')^2-A(A't_i^2-D't_i+E')}} \right]=1
\end{align*}

\vspace{1mm}
which can be rearranged as
\begin{align}
    (A+B')=\pm \frac{2B'(B't_i+C')-A(2A't_i-D')}{2\sqrt{(B't_i+C')^2-A(A't_i^2-D't_i+E')}} \label{eqdengeq0}
\end{align}

\vspace{1mm}
Note that $(B't_i+C')^2-A(A't_i^2-D't_i+E') \geq 0$ for all $t_i$. If $(B't_i+C')^2-A(A't_i^2-D't_i+E')=0$ for some $t_i$, then it means both $s_1$ and $s_2$ occupies the same position at time $t_i$. In this case, the function $t-t_i$ becomes $0$ and takes a sharp turn ($\frac{d}{dt_i}(t-t_i)$ becomes undefined) at $t_i$. However, if $(B't_i+C')^2-A(A't_i^2-D't_i+E')>0$, we get the following by squaring both sides of \eqref{eqdengeq0} and multiplying the denominator on both sides.
\begin{align*}
    &4(A+B')^2((B't_i+C')^2-A(A't_i^2-D't_i+E')) \\
    =&(2B'(B't_i+C')-A(2A't_i-D'))^2
\end{align*}

\vspace{1mm}
After extensive algebra, we finally get the following.
\begin{align} \label{eq_statquad}
    Pt_i^2+Qt_i+R=0
\end{align}

\vspace{1mm}
where
\begin{align*}
\begin{split}
    P&=4(A+B')^2(B'^2-AA')-4B'^4 \\
    &-(4A^2A'^2-8AA'B'^2)
\end{split}\\
\begin{split}
    Q&=4(A+B')^2(2B'C'+AD')-8B'^3C' \\
    &-(4AB'^2D'-8AA'B'C'-4A^2A'D')
\end{split}\\
    R&=4(A+B')^2(C'^2-AE')-4B'^2C'^2 \\
    &-(4AB'C'D'+A^2D'^2)\\
\end{align*}

\vspace{1mm}
The values of $t_i$ that satisfies the two equations in \eqref{eqdengeq0} can be obtained by solving for the two roots that satisfies \eqref{eq_statquad}. These roots can once again, be obtained using the quadratic formula given below.
\begin{align}
    t_i=\frac{ -Q \pm \sqrt{Q^2-4PR}}{2P}
\end{align}

\subsection{Second Order Cone Program (SOCP) Formulation}
In this section, we explain the SOCP formulation for the special case of the MT-TSP where targets follow linear trajectories. Our formulation is very similar to the one presented in \cite{stieber2022}, with a few changes made to accommodate the new objective which is to minimize the time taken by the agent to complete the tour, as opposed to minimizing the length of the path traversed by the agent.

\vspace{1mm}
To ensure that the trajectory of the agent starts and ends at the depot, we do the following: Given the depot $d$ and a set of $n-1$ targets $\{1, \cdots, n-1\}$, we define a new stationary target $n$ which acts as a copy of $d$. This is achieved by fixing $n$ at the same position as $d$. We then define constraints so that the agent's trajectory starts from $d$ and ends at $n$. To find the optimal solution to the MT-TSP, we then seek to minimize the time at which $n$ is visited by the agent.

\vspace{1mm}
Let $S \equiv \{1, \cdots, n\}$ and $S_d \equiv \{d\} \cup S$. We use the same family of decision variables as in \cite{stieber2022} where $x_{i,j} \in \{0,1\}$ indicates the decision of sending the agent from target $i$ to target $j$ ($x_{i,j}=1$ if yes. No otherwise), and $t_{i} \in \mathbb{R}$ describes the arrival time of the agent at target $i$ (or depot).

\vspace{1mm}
Our objective is to minimize the time at which the agent arrives at target $n$ as shown below.
\begin{align}
    \min t_{n}
\end{align}

\vspace{1mm}
Each target $j$ must be visited once by the agent:
\begin{align}
    \sum_{i \in S_d: i \neq j}x_{i,j}=1, \; \forall \: j \in S 
\end{align}

\vspace{1mm}
The agent can start only once from the depot:
\begin{align}
    \sum_{j \in S}x_{d,j}\leq 1
\end{align}

\vspace{1mm}
Flow conservation is ensured by:
\begin{align}
    \sum_{i \in S_d: i \neq j}x_{i,j} \geq \sum_{i \in S: i \neq j}x_{j,i}, \; \forall \: j \in S
\end{align}

\vspace{1mm}
The agent must visit each target within its assigned time-window. Note that, for the depot $d$ and the target $n$, we assign the time-window to be the entire time-horizon $T$:
\begin{align}
    t_l^j \leq t_{j} \leq t_u^j, \; \forall \: j \in S_d
\end{align}

\vspace{1mm}
As shown in \cite{stieber2022}, real auxiliary variables $c_{i,j}^x$ and $c_{i,j}^y$ for the $x-$ and $y-$ components of the Euclidean distance are introduced as follows:
\begin{align}
\begin{split}
    &c_{d,j}^x-\left(\left(x_l^j+t_{j}\frac{\Delta{x_j}}{\Delta{t_j}}-t_l^j\frac{\Delta{x_j}}{\Delta{t_j}}\right)- d_x \right)=0, \\
    & \forall \: j \in S
\end{split}    
\end{align}
\begin{align}
\begin{split}
    &c_{d,j}^y-\left(\left(y_l^j+t_{j}\frac{\Delta{y_j}}{\Delta{t_j}}-t_l^j\frac{\Delta{y_j}}{\Delta{t_j}}\right)- d_y \right)=0, \\
    & \forall \: j \in S
\end{split}    
\end{align}
\begin{align}
\begin{split}
    c_{i,j}^x &- \left(\left(x_l^j+t_{j}\frac{\Delta{x_j}}{\Delta{t_j}}-t_l^j\frac{\Delta{x_j}}{\Delta{t_j}}\right)- \left(x_l^i+t_{i}\frac{\Delta{x_i}}{\Delta{t_i}}-t_l^i\frac{\Delta{x_i}}{\Delta{t_i}}\right)\right) \\
    &=0, \; \forall \: i \in S, j \in S: i \neq j
\end{split}
\end{align}
\begin{align}
\begin{split}
    c_{i,j}^y &- \left(\left(y_l^j+t_{j}\frac{\Delta{y_j}}{\Delta{t_j}}-t_l^j\frac{\Delta{y_j}}{\Delta{t_j}}\right)- \left(y_l^i+t_{i}\frac{\Delta{y_i}}{\Delta{t_i}}-t_l^i\frac{\Delta{y_i}}{\Delta{t_i}}\right)\right) \\
    &=0, \; \forall \: i \in S, j \in S: i \neq j
\end{split}
\end{align}

\vspace{1mm}
Here, for some target $i$, $(x_l^i,y_l^i)$ represents the coordinates of $i$ at time $t_l^i$ and $(x_u^i,y_u^i)$ represents the coordinates of $i$ at time $t_u^i$. Also, $\Delta{x_i}=x_u^i-x_l^i$, $\Delta{y_i}=y_u^i-y_l^i$, and $\Delta{t_i}=t_u^i-t_l^i$. Finally, $(d_x,d_y)$ denotes the coordinates of the depot $d$.

\vspace{1mm}
The following conditions requires that if the agent travels between any two targets or the depot and a target, this travel must be feasible:
\begin{align}
\begin{split}
    &a_{i,j} \leq v_{max}(t_{j}-t_{i}+T(1-x_{i,j})), \\
    &\forall \: i \in S_d, j \in S : i \neq j \label{eq_MTZ}
\end{split} \\
    &a_{i,j} \geq 0, \; \forall \: i \in S_d, j \in S : i \neq j
\end{align}

\vspace{1mm}
The below conditions are needed to formulate the cone constraints:
\begin{align}
\begin{split}
    &\overline{a}_{i,j}=a_{i,j}+R(1-x_{i,j}), \; \forall \: i \in S_d, j \in S : i \neq j
\end{split}
\end{align}

\vspace{1mm}
Where given the square area with fixed side length $L$ that contains all the moving targets and the depot, $R=\sqrt{2}L$ is the length of the square's diagonal.

\vspace{1mm}
Finally, the cone constraints are given as:
\begin{align}
    &(c_{i,j}^x)^2+(c_{i,j}^y)^2 \leq (\overline{a}_{i,j})^2, \; \forall \: i \in S_d, j \in S : i \neq j 
\end{align}

\vspace{1mm}
The agent must visit $n$ only after visiting all the other targets:
\begin{align}
    t_{n} \geq t_{j}, \; \forall \: j \in S_d
\end{align}

\begin{remark}
    Although \eqref{eq_MTZ} prevents subtours in most cases, they can still arise very rarely when two or more target trajectories intersect at a time common to their time-windows. The constraints defined by \eqref{eq_sub_elim_two_tar} prevents subtours for the two target case. However for more than two targets, we will need additional subtour elimination constraints:
    \begin{align}
        x_{i,j}+x_{j,i} \leq 1, \; \forall \: i \in S, j \in S: i \neq j \label{eq_sub_elim_two_tar}
    \end{align}
\end{remark}

\end{document}